\documentclass[letterpaper, 10 pt, conference]{ieeeconf}  
\IEEEoverridecommandlockouts                              
\usepackage{graphics} 
\usepackage{epsfig} 
\usepackage{mathptmx} 
\usepackage{amsmath} 
\usepackage{amssymb}  
\usepackage{url}
\usepackage{makecell}
\usepackage{float}
\usepackage[]{algorithm2e}
\usepackage[dvipsnames]{xcolor}
\usepackage{subcaption}
\usepackage[normalem]{ulem}
\usepackage{multirow}
\usepackage{booktabs}
\newtheorem{theorem}{Theorem}

\newtheorem{definition}{Definition}

\usepackage{tikz}
\usetikzlibrary{shapes,arrows,positioning,3d}

\usetikzlibrary{arrows.meta}

\title{\LARGE \bf
Passive iFIR filters for data-driven velocity control in robotics
}

\author{Yi Zhang$^{1}$, Zixing Wang$^{1}$, and Fulvio Forni$^{1}$
\thanks{
*Yi Zhang is supported by Sciences Research Council and AgriFoRwArdS CDT EP/S023917/1 and by RT Corp. Zixing Wang is supported by CSC Cambridge Scholarship.}
\thanks{$^{1}$Department of Engineering, 
        University of Cambridge, UK.}
}

\begin{document}

\maketitle
\thispagestyle{empty}
\pagestyle{empty}

\begin{abstract}
We present a passive, data-driven velocity control method for nonlinear robotic manipulators that achieves better tracking performance than optimized PID with comparable design complexity.
Using only three minutes of probing data, a VRFT-based design identifies passive iFIR controllers that (i) preserve closed-loop stability via passivity constraints and (ii) outperform a VRFT-tuned PID baseline on the Franka Research 3 robot in both joint-space and Cartesian-space velocity control, achieving up to a 74.5\% reduction in tracking error for the Cartesian velocity tracking experiment with the most demanding reference model.
When the robot end-effector dynamics change, the controller can be re‑learned from new data, regaining nominal performance. 
This study bridges learning-based control and stability-guaranteed design: passive iFIR learns from data while retaining passivity-based stability guarantees, unlike many learning-based approaches.
\end{abstract}

\section{Introduction}

Learning-based control has made impressive progress in robotics, enabling manipulators to acquire complex behaviors directly from data when accurate models are unavailable or hard to derive. Deep reinforcement learning policies can in principle represent rich control laws that adapt to unknown dynamics and contact conditions. However, these methods typically require large amounts of data and often abandon formal guarantees of stability or robustness \cite{OSINENKO2022123}. 

In practice, proportional–integral–derivative (PID) controllers remain the standard in industrial manipulators due to their simplicity and ease of deployment. Yet PID requires tuning, and its three-parameter structure is too limited to shape the closed-loop response arbitrarily, especially for high-order dynamics. 
This leaves a gap between simple but limited classical controllers and expressive but fragile data-driven ones. Developing controllers that are learnable from data while providing stability guarantees remains a central challenge for the learning and robotics communities.

This paper argues that passive iFIR controllers~\cite{wang2024passiveifir,wang2024dissipativeifirfiltersdatadriven} offer a middle ground. An iFIR controller extends PID with a finite impulse response (FIR) term that provides a richer temporal representation while preserving a simple linear structure. Using virtual reference feedback tuning (VRFT) \cite{bib:Campi2002}, the iFIR parameters can be learned directly from a small batch of experimental data to approximate a desired closed-loop response, without requiring an explicit model of the robot and heavy computation. Passivity constraints are imposed on the iFIR controller directly, and hence guarantee closed-loop stability when interconnected with a passive robot and environment (if the controller sampling frequency is sufficiently large).

Velocity control plays a central role in modern robotic manipulators, serving as the key interface between low-level actuation and high-level motion generation.
Industrial robots typically execute commands through an internal velocity loop because regulating velocity directly ensures smoothness, robustness, and inherent damping \cite{KELLY20051423}.
Applications such as polishing, welding, and machining demand precise control of end-effector velocity along desired paths while maintaining stability under model uncertainty \cite{Jatta2006}.

Motivated by this, we use joint- and Cartesian-space velocity control on a Franka Research 3 as our benchmark. We show experimentally that passive iFIR learned from probing data outperforms optimized PID and remains stable under dynamics changes. We demonstrate:
\begin{enumerate}
  \item \textbf{Stability}: passivity constraints applied to the iFIR controller ensure stability of the closed-loop system;
  \item \textbf{Performance}: passive iFIR control significantly reduce tracking error compared to optimized PID, especially in cases where end-effector dynamics are high-order and the reference model is demanding;
  \item \textbf{Adaptability}: when dynamics vary, a new controller can be relearned in minutes with new probing data.
\end{enumerate}
To the best of our knowledge, this is the first instance where passive iFIR controllers have been experimentally validated in a robotic setting.
 
Section \ref{sec:background} reviews different types of controllers for robot manipulators. 
Section \ref{sec:problem} introduces the velocity control problem we tackle. 
Section \ref{sec:iFIR} presents the passive iFIR 
synthesis, with new results for the multiple-input multiple-output (MIMO) setting. 
Section \ref{sec:joint_space} shows the results of joint-space velocity control of a Franka arm. Section \ref{sec:cartesian_space} extends the evaluation to Cartesian-space velocity control. 
Section \ref{sec:discussion} discusses limitations and future work.

\section{Related work} \label{sec:background}
Robot manipulators still mainly rely on PID controllers for their intuitive structure and ease of implementation \cite{Borase2021}. 
Beyond classical tuning methods (e.g., Ziegler-Nichols and Cohen-Coon), previous studies have looked at intelligent tuning by utilizing fuzzy control \cite{6032097}, learning-assisted tuning \cite{Zhang2020}, and population-based optimization algorithms \cite{5670571}.
Despite its effectiveness and ubiquity, PID control offers only three degrees of freedom, which limits the performance of the closed-loop response, especially when the system dynamics are high-order and nonlinear.

From the perspective of learning-based control,
feedforward and recurrent neural networks have been used extensively to compensate for uncertain robot arm dynamics and improve tracking performance \cite{bib:Ozaki1991,bib:Lewis1996,bib:Ge1997}. Similarly, reinforcement learning (RL) has proven effective in discovering control policies capable of managing the complex dynamics inherent in robot manipulation.
Such data-driven methods can automatically improve control policies by learning from experience, allowing robots to handle tasks and environments that are difficult to model.
Learning-based methods have demonstrated strong performance in robotic control, but a significant drawback is that vanilla learning-based controllers generally lack formal stability guarantees \cite{OSINENKO2022123}. 
A learned policy may appear effective in training or simulations yet exhibit unstable behavior when encountering novel situations.
There are efforts from the learning community to build safe policies. For example, Lyapunov‑guided RL builds Lyapunov functions into local constraints, and policy is obtained by solving a constrained Markov decision process \cite{NEURIPS2018_4fe51490, 9146733}. However, most of these approaches only satisfy safety level I \cite{annurev:/content/journals/10.1146/annurev-control-042920-020211}, where constraint satisfaction is encouraged but not guaranteed and stability is typically verified empirically.

Passivity-based control \cite{Ortega2001} leverages energy-dissipation inequalities to guarantee robust stability. Because the negative-feedback interconnection of passive systems is itself passive \cite{VanDerSchaft1999, Ortega1998, Sepulchre1997}, a passive controller can be coupled with a robot to ensure stability, even when the robot model is not accurate. This framework extends to environmental interactions: since most physical environments are passive, the robot remains stable during contact despite environmental uncertainties.
Recent learning-based manipulation controllers exploit this idea by constraining the learned policy to remain passive through monitoring and limiting energy injection \cite{https://doi.org/10.1049/cth2.12558, YANG202525}.
In contrast, our data-driven approach synthesizes controllers that are passive by construction. By formulating the problem as a constrained optimization, we identify the  passive controller that optimizes performance given the data.

Velocity control is a natural testbed for our approach and a core component of many industrial robot architectures. 
Most manipulators implement a two-loop structure: an inner joint-velocity loop with an outer position or task-space loop. 
This architecture is theoretically well established. \cite{KELLY20051423} analyzes the two-loop design and provides conditions for global exponential stability. 
\cite{doi:10.1177/0278364908091463} provides a theoretical and empirical comparison of velocity, acceleration, and force-based operational-space controllers, reporting that velocity-based controllers are straightforward to deploy but large gains and noisy velocity measurements can lead to excessive damping or even instability, which highlights the importance of shaping the effective velocity-loop dynamics.
For these reasons, we adopt joint- and Cartesian-space velocity control as representative benchmarks to evaluate tracking performance and robustness of the proposed controllers.

\section{Problem statement} \label{sec:problem}

We study velocity control of a seven-degree-of-freedom robot manipulator with revolute joints under different load dynamics, considering two common interfaces:

\textbf{(i) Joint-space velocity control (single joint):} given a desired joint velocity trajectory $v^*(t)\in\mathbb{R}$ for all time $t$, the controller outputs a torque $u(t)\in\mathbb{R}$ to regulate the joint velocity $v(t)$.

\textbf{(ii) Cartesian-space velocity control:} given a desired end-effector velocity $v_{ee}^*(t)\in\mathbb{R}^3$ for all time $t$, the controller outputs joint torques $u(t)\in\mathbb{R}^7$ to regulate the end-effector velocity $v_{ee}(t)$.

As shown in the block diagram of Fig. \ref{fig:block_diagram}, the manipulator is 
considered as a plant $P$ mapping generalized force inputs $u$ (i.e. torques in our case) into velocity outputs $y$ (joint or Cartesian). The controller is denoted by $C$. A user-specified reference model $M_r$
encodes the desired transient behavior of the controlled robot. Our goal is to shape the closed-loop response of the robot to approximate, as closely as possible, the reference response $M_r$ while ensuring closed-loop stability.
For simplicity, experiments are limited to 
first- and second-order linear time-invariant reference models. 
Extensions to larger reference models are straightforward. 

Closed-loop stability is guaranteed by enforcing controller passivity, provided the sampling frequency is high enough to mitigate discretization effects. Physical dissipation ensures that uncontrolled manipulators are output-strict passive from input $u$ to velocity output $y$ \cite{Sepulchre1997, Ortega1998}, while the zero-order hold implementation of the control action introduces a shortage of passivity proportional to the sampling period.
According to the passivity theorem and under mild detectability assumptions, asymptotic stability is preserved if this effect is compensated by the system’s dissipation \cite{Sepulchre1997,Spong2022}.
For a sufficiently high control frequency, the parasitic energy generated by discretization is dominated by the combination of the robot's internal damping and the controller’s virtual dissipation, thereby preserving the passivity, and thus the stability, of the closed-loop system. We refer to \cite{Stramigioli2005} for a more comprehensive approach to passivity-preserving interconnections between continuous and discrete systems.

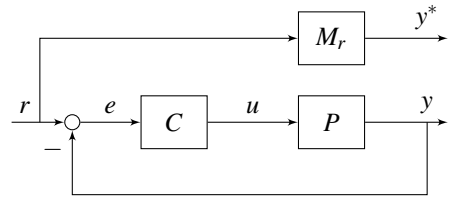
\begin{figure}[htpb]
    \centering
    
\tikzstyle{block} = [draw, rectangle, minimum height=3em, minimum width=4em]
\tikzstyle{sum} = [draw, circle, node distance=1cm]
\tikzstyle{input} = [coordinate]
\tikzstyle{output} = [coordinate]
\tikzstyle{pinstyle} = [pin edge={to-,thin,black}]

\begin{tikzpicture}[auto, >=latex', node distance=0.8cm]

    \tikzset{
        block/.style={draw, rectangle, minimum height=2em, minimum width=2.5em},
        sum/.style={draw, circle, inner sep=0pt, minimum size=2mm}
    }
    
    \node (input) {};
    \node [sum, right=0.7cm of input] (sum) {};
    \node [block, right=of sum] (controller) {$C$};
    
    \node [block, right=1.2cm of controller] (plant) {$P$};
    
    \node [block, above=0.4cm of plant] (model) {$M_r$};
    
    \coordinate [right=1.1cm of plant] (end);
    \coordinate (output_y) at (end);
    \coordinate (output_ystar) at (model -| end);

    \draw [->] (input) -- node [near start] {$r$} (sum);
    \draw [->] (sum) -- node {$e$} (controller);
    
    \draw [->] (controller) -- node {$u$} (plant);
    \draw [->] (plant) -- node [name=y, near end] {$y$} (output_y);
    
    \draw [->] (input) ++(0.5,0) |- (model);
    \draw [->] (model) -- node [near end] {$y^*$} (output_ystar);
    
    \draw [->] (y) -- ++(0,-1.2) -| node [pos=0.85, left] {$-$} (sum);
\end{tikzpicture}

    \caption{Controlled robot velocity feedback loop with model reference in parallel. The output $y$ represents either single joint velocity or end-effector Cartesian velocity.}
    \label{fig:block_diagram}
\end{figure}

\section{Passive data-driven iFIR control} \label{sec:iFIR}

\subsection{iFIR controller structure}

For finite-impulse response filters, passivity is equivalent to positive realness, which we exploit to enforce computationally efficient passivity constraints.

\begin{definition} [iFIR controller] \label{defi:iFIR}
    
    Given coefficient matrices $\{H_{k}\}_{k=0}^{m-1}, H_{k} \in \mathbb{R}^{n_c \times n_{c}}, \Gamma \in \mathbb{R}^{n_c \times n_{c}}$ and sampling interval $T_{s}$ 
    seconds,
    for the generic input $e$,
    the iFIR controller output $u=C(e)$ associated with these parameters is given by 
    \begin{align}
        u(t) = C(e)(t) &= \sum_{k=0}^{m-1} H_{k} e(t-k) + T_{s} \Gamma  \sum_{k \geq 0} e(t-k). \label{eq:MIMO_L_iFIR}
    \end{align}    \vspace{0mm}
\end{definition}
Hence, an iFIR controller is a MIMO discrete-time linear time-invariant system with $n_{c}$ inputs and outputs. It is constructed as a parallel interconnection between a MIMO integrator  and a MIMO finite impulse response filter (FIR) of order $m$. In comparison to PID, the iFIR controller replaces the PD components with a FIR filter, smoothing the derivative term to enable a more nuanced control action.

\subsection{iFIR controller synthesis} \label{sec:iFIR_controller_tuning}
The coefficients of an iFIR controller can be obtained from data, through optimization. This is illustrated in  \cite{wang2024passiveifir} for the single-input, single-output case. The procedure takes advantage of virtual reference feedback tuning (VRFT) \cite{bib:Campi2002}, briefly recalled here. 

\textit{Step 1.} Choose a stable reference model $M_r$ that specifies the desired closed-loop velocity response.

\textit{Step 2.} Probe the robot $P$ with generic inputs $u\in \ell_{2,N}^{n_{c}}$
(square integrable signals given by $N$ samples, each represented by a vector of dimension $n_c$), to collect input-output data $(u,y)\in \ell_{2,N}^{n_{c}}\times \ell_{2,N}^{n_{c}}$. The probing input is typically chosen as a summation of sine waves selected in the frequency range of interest (control bandwidth). 

\textit{Step 3.} For a given sampling interval $T_s$, solve 
the following optimization problem \cite{wang2024passiveifir,bib:Campi2002}
\begin{subequations}
\label{eq:opt}
\begin{align} 
    \min_{\Gamma, \{H_{k}\}_{k=0}^{m-1}} \  \frac{1}{N} \sum_{t=0}^{N-1} \left\vert  u(t) - C(e)(t)  \right\vert^{2} \label{eq:obj1}\\
    e(t) = (M_{r}^{-1}(y) - y)(t).  \label{eq:et}
\end{align}
\end{subequations}

This construction is based on the idea of \textit{virtual reference}. See \cite{bib:Campi2002} and \cite[Sec. II]{wang2024passiveifir} for a detailed discussion of this concept. For instance, if we consider the signal $y$ directly generated by the reference model $M_r$, we can derive the corresponding (virtual) reference $r = M_r^{-1} (y)$, thus the corresponding error signal $e = r-y$. This is the purpose of \eqref{eq:et}. The lack of causality of $M_r^{-1}$ can be handled through the approximate inverse, by adding fast poles. The controller is then obtained by the least-squares fit of the relevant input/output pair $(e,u)$. This is the purpose of \eqref{eq:obj1}.
It has been shown in \cite{bib:Campi2002} that the solution of \eqref{eq:obj1} also minimizes the difference between the reference model and the closed loop of the robot, subject to conditions on the richness and length of the probing signal and the expressivity of $C$.  

The procedure outlined above does not guarantee the construction of a stabilizing controller \cite{bib:Campi2002}.
We tackle stability guarantees via the passivity theorem, yielding a data-driven procedure for passive iFIR controllers. A preliminary version of this idea was explored in \cite{wang2024passiveifir} for the restricted SISO setting and validated in numerical simulations. Here in this paper, we generalize the approach to MIMO systems and substantiate it with real-world robotic experiments.

\label{sec:passive_iFIR_controller_tuning}
Let $I_{n_{c}}$ be a square identity matrix of size $n_{c}$. For generic iFIR controllers \eqref{eq:MIMO_L_iFIR} associated with parameters  $\{H_{k}\}_{k=0}^{m-1}$ and $\Gamma $, and sample interval $T_s$, define the quantity
    \begin{equation}
        F(\theta) =  \sum_{k=0}^{m-1}  \left( H_{k}e^{-jk\theta} + H_{k}^{T}e^{jk\theta} \right)
        \qquad \theta \in [0,\pi].
    \end{equation}

\begin{theorem} \label{thm:IFIR_tuning}
    Consider a generic iFIR controller $C$ in Definition \ref{defi:iFIR}.
    For any real scalars $\rho_{0} \geq 1$, $0 < \rho < 1$, and $M\geq2$, there exists $\epsilon \geq 0$, such that the iFIR controller $C$ obtained by \eqref{eq:opt} with the additional constraints
     \begin{subequations}  \label{eq:passivity_all}
    	\begin{align}
           \Gamma = \Gamma^{T} &\succeq 0, \label{eq:spr_Gamma} \\
        	   \left\vert H_{k} \right\vert_{2} &\leq \rho_{0}\rho^{k}, \,\quad \forall k\in\{0,\cdots,m-1\} \label{eq:temp_17}\\
           F\left(\frac{q}{M}\pi \right) &\geq \epsilon I_{n_{c}}\qquad \forall q \in \{0,\cdots,M\}, \label{eq:passivity}  
    	\end{align}
    \end{subequations}   
    is passive. 
\end{theorem}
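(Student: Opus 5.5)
Since $C$ is a discrete-time LTI system, we will show passivity by proving that its transfer matrix is positive real on the unit circle. For a MIMO transfer matrix $C(z)$ with all poles in the closed unit disk, and with poles on the unit circle simple and with positive semidefinite Hermitian residues, passivity from $e$ to $u$ follows from two facts via the discrete-time KYP lemma / positive-real lemma. The first is that $C(e^{j\theta}) + C(e^{j\theta})^{*} \succeq 0$ for all $\theta$ at which $C$ is analytic. The second is the residue condition at the boundary poles. From Definition~\ref{defi:iFIR},
\begin{equation*}
C(z) = \sum_{k=0}^{m-1} H_k z^{-k} + T_s\Gamma \frac{1}{1-z^{-1}} .
\end{equation*}
The FIR part is analytic away from $z=0$. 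The integrator has a single simple pole at $z=1$, and its residue is a positive multiple of $\Gamma$. By \eqref{eq:spr_Gamma}, this residue condition holds. The integrator is therefore a lossless passive block, and a direct computation gives $\frac{1}{1-e^{-j\theta}}+\frac{1}{1-e^{j\theta}} = 1$ for $\theta\neq 0$. Hence the Hermitian part of the integrator term on the unit circle is $T_s\Gamma \succeq 0$.

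Passivity of $C$ thus reduces, as a parallel interconnection of passive systems, to passivity of the FIR part. Its Hermitian part on the unit circle is exactly $F(\theta)$. By real-coefficient symmetry, $F(-\theta)=\overline{F(\theta)}$, so it suffices to prove $F(\theta)\succeq 0$ for all $\theta\in[0,\pi]$. The constraint \eqref{eq:passivity} only imposes this on the finite grid $\theta_q = q\pi/M$. The core of the proof is therefore a sampling argument: choose $\epsilon$ to absorb the interpolation error between grid points, using the uniform Lipschitz bound that \eqref{eq:temp_17} provides.

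The key step is to bound the derivative of $F$ uniformly over all admissible coefficients. Differentiating gives
\begin{equation*}
F'(\theta) = \sum_{k=0}^{m-1} jk\left(H_k^{T} e^{jk\theta} - H_k e^{-jk\theta}\right),
\end{equation*}
so by \eqref{eq:temp_17},
\begin{equation*}
\|F'(\theta)\|_2 \le 2\rho_0\sum_{k\ge 0} k\rho^k = \frac{2\rho_0\rho}{(1-\rho)^2} =: L .
\end{equation*}
This bound is independent of $m$ and of the data. For any $\theta\in[0,\pi]$, there is a grid point with $|\theta-\theta_q|\le \pi/(2M)$. Weyl's inequality for Hermitian matrices then gives
\begin{equation*}
\lambda_{\min}(F(\theta)) \ge \lambda_{\min}(F(\theta_q)) - \frac{L\pi}{2M} \ge \epsilon - \frac{L\pi}{2M} .
\end{equation*}
Choosing $\epsilon = \frac{\pi\rho_0\rho}{M(1-\rho)^2}$ therefore yields $F(\theta)\succeq 0$ on $[0,\pi]$, and hence on the whole circle. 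Adding the integrator contribution then gives $C(e^{j\theta})+C(e^{j\theta})^*\succeq 0$.

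The remaining step, which I expect to be the main technical obstacle, is the rigorous conversion of this frequency-domain inequality into a time-domain passivity statement. The difficulty is the marginally stable pole at $z=1$. I plan to handle it by splitting $C$ into its FIR part and its integrator part, and treating each separately.

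For the FIR part, Parseval's identity on finite-support truncations of $e$ gives
\begin{equation*}
\sum_{t=0}^{N} e(t)^{T}u_{\mathrm{FIR}}(t) \ge 0 .
\end{equation*}
To justify this, I would pass to the equivalent Toeplitz-matrix statement that $\frac12(\mathcal{T}+\mathcal{T}^{T}) \succeq 0$ for each truncation, which is standard for causal FIR operators.

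For the integrator part, I would use the explicit storage function $S(x) = \tfrac{1}{2T_s}x^{T}\Gamma^{\dagger}x$ with state $x(t) = T_s\Gamma\sum_{k\le t-1} e(k)$. This yields the dissipation inequality
\begin{equation*}
S(x(t+1)) - S(x(t)) \le e(t)^{T}u_{\mathrm{int}}(t) .
\end{equation*}
Summing the two supply inequalities then gives passivity of $C$, which concludes the plan.
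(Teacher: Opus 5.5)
Your proposal is correct and follows the paper's proof. You split $C$ into the integrator, passive by \eqref{eq:spr_Gamma}, and the FIR part, whose passivity amounts to $F(\theta)\succeq 0$ on $[0,\pi]$, and you recover this from \eqref{eq:passivity} via a Lipschitz bound on $F$, the nearest-grid-point distance $\pi/(2M)$, and an eigenvalue perturbation step, as the paper does. The differences are refinements.

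\begin{itemize}
\item Keeping $k$ paired with $\rho^k$ gives the Lipschitz constant $2\rho_0\sum_{k<m}k\rho^k$, which never exceeds the paper's $2(m-1)\rho_0\sum_{k<m}\rho^k$.
\item Your tail bound $2\rho_0\rho/(1-\rho)^2$ makes $\epsilon$ independent of $m$. This is a large gain for long filters such as $m=500$, unless $\rho$ is very close to $1$, where the finite sum is better.
\item You spell out the time-domain passivity step that the paper takes as standard.
\end{itemize}

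The only slip is calling the integrator lossless. Your own storage computation shows it dissipates $\tfrac{T_s}{2}e(t)^{T}\Gamma e(t)$ per step, which is harmless for the argument.
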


Inequality \eqref{eq:spr_Gamma} guarantees the passivity of the integral action. In the context of velocity control, the integral action is the mechanical analog of an elastic generalized force. $\Gamma$ plays the role of a stiffness parameter. \eqref{eq:spr_Gamma} guarantees that the associated elastic potential is convex.
Inequality \eqref{eq:temp_17} provides a form of regularization. It requires that the impulse response $\{H_k\}_{k=0}^{m-1}$ of the FIR part of the controller converges to zero exponentially as $k$ grows. Reducing $\rho$ enforces faster convergence to steady state, that is, a stronger fading-memory property of the FIR part. 
\eqref{eq:passivity} is the key inequality for passivity. The condition $F(\theta) \geq 0$ for all $\theta \in [0,\pi]$ is equivalent to the property of positive realness of the FIR filter transfer function,
a well-known condition for passivity \cite{Khalil2002}. \eqref{eq:passivity} makes this infinite-dimensional constraint tractable through relaxation, which involves finite sampling at $\frac{q}{M}\pi$ paired with a bound $\varepsilon$. $\varepsilon$ decreases as the sample size $M$ increases. This trade-off is typically explored by trial and error. We design the filter for a given (small) bound $\varepsilon$ and (large) sampling $M$, then test for passivity. If passivity does not hold, we increase either $\varepsilon$ or $M$, and vice versa. A (conservative) estimate for $\varepsilon$, given $M$, is provided in the proof of Theorem \ref{thm:IFIR_tuning}. 

\begin{proof}
    The iFIR controller is the sum of integral action and FIR action. 
    Passivity is closed with respect to sum. Therefore, passivity of the
    iFIR controller follows from the passivity of these two subcomponents.
    Passivity of the integral action is guaranteed by \eqref{eq:spr_Gamma}.
    For the FIR action, we need to show that there exists $\varepsilon \geq 0$ 
    for which \eqref{eq:passivity} guarantees $F(\theta) \geq 0$ for all $\theta \in [0,\pi]$. The latter corresponds to positive-realness of the FIR filter, that is,
    passivity of the FIR filter.
    \begin{subequations}
    For all $\theta, \Delta \in [0,\pi]$, we have 
    \begin{align}
        &\left\vert F(\theta + \Delta) - F(\theta) \right\vert_{2} \nonumber  \\
        &\leq   \sum_{k=0}^{m-1}  \left( \left\vert H_{k} \right\vert_{2} \left| e^{-jk(\theta+\Delta)} - e^{-jk\theta} \right| + \left\vert H_{k} \right\vert_{2}\left|e^{jk(\theta + \Delta)}  -e^{jk\theta}\right| \right) \nonumber \\
        &\leq  2 \sum_{k=0}^{m-1}   \left\vert H_{k} \right\vert_{2} \left| k(\theta+\Delta) - k\theta \right| 
        \leq   2\!(m-1)\left| \Delta \right| \sum_{k=0}^{m-1} \!\left\vert H_{k} \right\vert_{2}. \label{eq:temp_14}
    \end{align}
    Consider now the uniform sampling of $F(\theta)$ on $[0,\pi]$ with $M+1$ samples.  Combining \eqref{eq:temp_14} with the sampling argument in \cite{wang2024passiveifir}, we can show that for any given $\theta \in [0,\pi]$ there exists $q \in \{0,\cdots,M\}$ such that 
    \begin{align}
        \left\vert F\left( \frac{q}{M}\pi\right) - F(\theta) \right\vert_{2} &\leq  \frac{(m-1)}{M}\pi\sum_{k=0}^{m-1} \left\vert H_{k} \right\vert_{2} \nonumber \\
        &\leq  \frac{(m-1)}{M}\pi\rho_{0}\frac{1-\rho^{m}}{1-\rho}, \label{eq:temp_16}
    \end{align}
    where the last inequality follows from \eqref{eq:temp_17}.
    Hence, \eqref{eq:passivity} guarantees $F(\theta) \geq 0$ for all $\theta \in [0,\pi]$ if 
    \begin{equation}\label{eq:temp_19} 
        \epsilon \geq \frac{m-1}{M} \pi\rho_{0}\frac{1-\rho^{m}}{1-\rho}. \vspace{-2mm}
    \end{equation} 
    \end{subequations}
\end{proof}

Fig.  \ref{fig:interpretation} shows a graphical interpretation of Theorem \ref{thm:IFIR_tuning}. The shaded region represents the manifold of passive operators. Solving \eqref{eq:opt} returns an unconstrained iFIR controller that best fits the data. 
The (convex) constrained optimization given by \eqref{eq:opt} and \eqref{eq:passivity_all} trades fitting performance for passivity, returning
the optimal passive iFIR controller. 
Theorem \ref{thm:IFIR_tuning} defines a convex problem. In fact, \eqref{eq:et} can be pre-computed from the data $y$.
Then, \eqref{eq:obj1} is a quadratic cost and 
the constraints in \eqref{eq:passivity_all} can be represented as linear matrix inequalities (LMIs). Notably, to optimize computation, \eqref{eq:temp_17} can be relaxed to a set of linear inequalities on the elements of the FIR controller's matrices. 
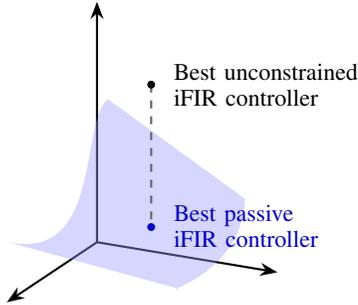
\begin{figure}[htbp]
    \centering

\begin{tikzpicture}[
    x={(-0.3cm,-0.2cm)}, y={(0.6cm,-0.1cm)}, z={(0cm,0.8cm)}, 
    scale=1.6
]

    \draw[-Stealth, black, thick] (0,0,0) -- (2.5,0,0);
    \draw[-Stealth, black, thick] (0,0,0) -- (0,2.5,0);
    \draw[-Stealth, black, thick] (0,0,0) -- (0,0,2.5);

    \begin{scope}
        \path[fill=blue!40, opacity=0.4] 
            (0.3, 0.3, 1.6) .. controls (1.5, 0.5, 1.7) and (2.2, 1.2, 0.4) .. (2.7, 0.0, 0.7)
            -- (2.7, 2.5, 0.5) .. controls (2, 1.8, 0.2) and (1, 2.5, 0.2) .. (0.3, 2.2, 0.8)
            -- cycle;
            
    \end{scope}

    \coordinate (P) at (1.5, 1.5, 2.2);
    \coordinate (Pstar) at (1.5, 1.5, 0.72); 

    \draw[dashed, gray!80!black, thick] (P) -- (Pstar);

    \filldraw[black] (P) circle (0.8pt) 
        node[right, xshift=5pt, font=\small, align=left] {Best unconstrained \\ iFIR controller};
        
    \filldraw[blue!80!black] (Pstar) circle (0.8pt) 
        node[below right, xshift=5pt, yshift=12pt, font=\small, align=left] {Best passive \\ iFIR controller};

\end{tikzpicture}
    \caption{A graphical interpretation of Theorem \ref{thm:IFIR_tuning}.}
    \label{fig:interpretation}
\end{figure}

\section{Joint space velocity control}
\label{sec:joint_space}

\subsection{Experimental setting and performance metrics}

The experimental setup of the system is illustrated in Fig.  \ref{fig:joint_setup}. 
A flexible wood strip is driven by the built-in motor at the last joint of the Franka Research 3 robot with joint torques commanded via \texttt{libfranka} at 1 kHz. 
We consider two loads: a long plate (Fig.  \ref{fig:setup_nominal}) and a shorter plate (Fig.  \ref{fig:setup_dynamics}). 
The flexibility of the plate introduces high-order, poorly modeled dynamics, which makes accurate velocity control challenging. This configuration is representative of practical scenarios, such as pick-and-place or tool-handling tasks, where the robot manipulates payloads that behave as swinging or compliant extensions.

For all cases, we compare our iFIR controller with a passive PID controller. Both controllers are optimized using VRFT on the same training data.
Reference tracking performance is quantified through the following normalized root-mean-square error (NRMSE) metric
\begin{equation}
    \mathrm{NRMSE}
    = \frac{
        \sqrt{\frac{1}{N}\sum_{k=1}^N \bigl( y(t_k) - y^*(t_k) \bigr)^{2}}
    }{
        \sqrt{\frac{1}{N}\sum_{k=1}^N y^*(t_k)^{2}}
    }
    \label{eq:nrmse}
\end{equation}
where $y^*(t_k) = M_r(r)(t_k)$ is the desired closed-loop response and $y(t_k)$ is the measured closed-loop response, all computed at sampling times $t_k$.
The improvement of iFIR over the PID baseline is computed from their NRMSE as
\begin{equation}
    \mathrm{Improvement}
    = 100 \times
      \frac{\mathrm{NRMSE}_{\mathrm{PID}} - \mathrm{NRMSE}_{\mathrm{iFIR}}}
           {\mathrm{NRMSE}_{\mathrm{PID}}}.
\end{equation}
Positive values indicate that iFIR outperforms PID, whereas negative values indicate worse performance.

\begin{figure}[htpb]
    \centering
    \begin{subfigure}[t]{.49\linewidth}
       \includegraphics[width=0.99\linewidth]{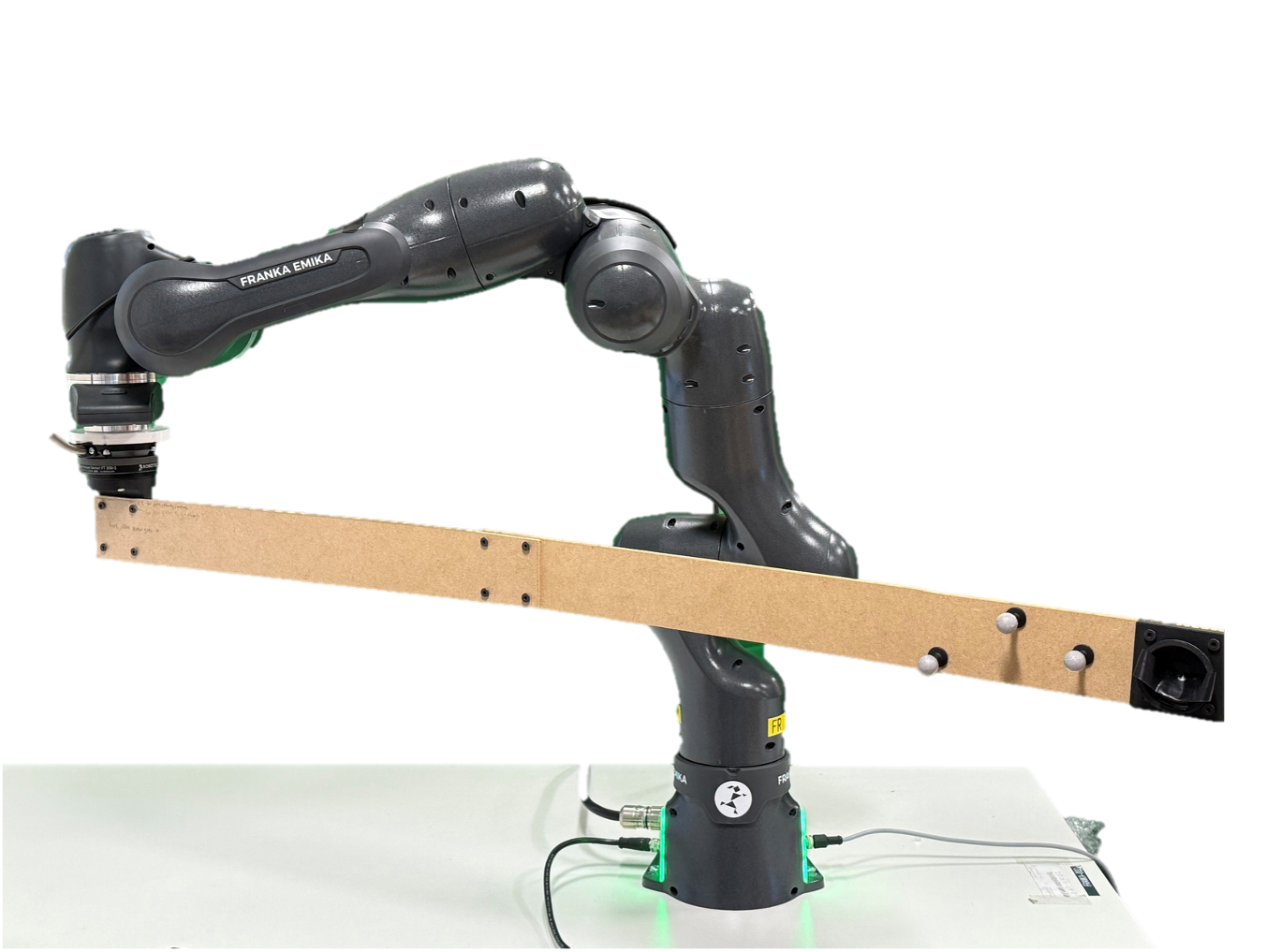}
    \end{subfigure}
    \begin{subfigure}[t]{.49\linewidth}
       \includegraphics[width=0.99\linewidth]{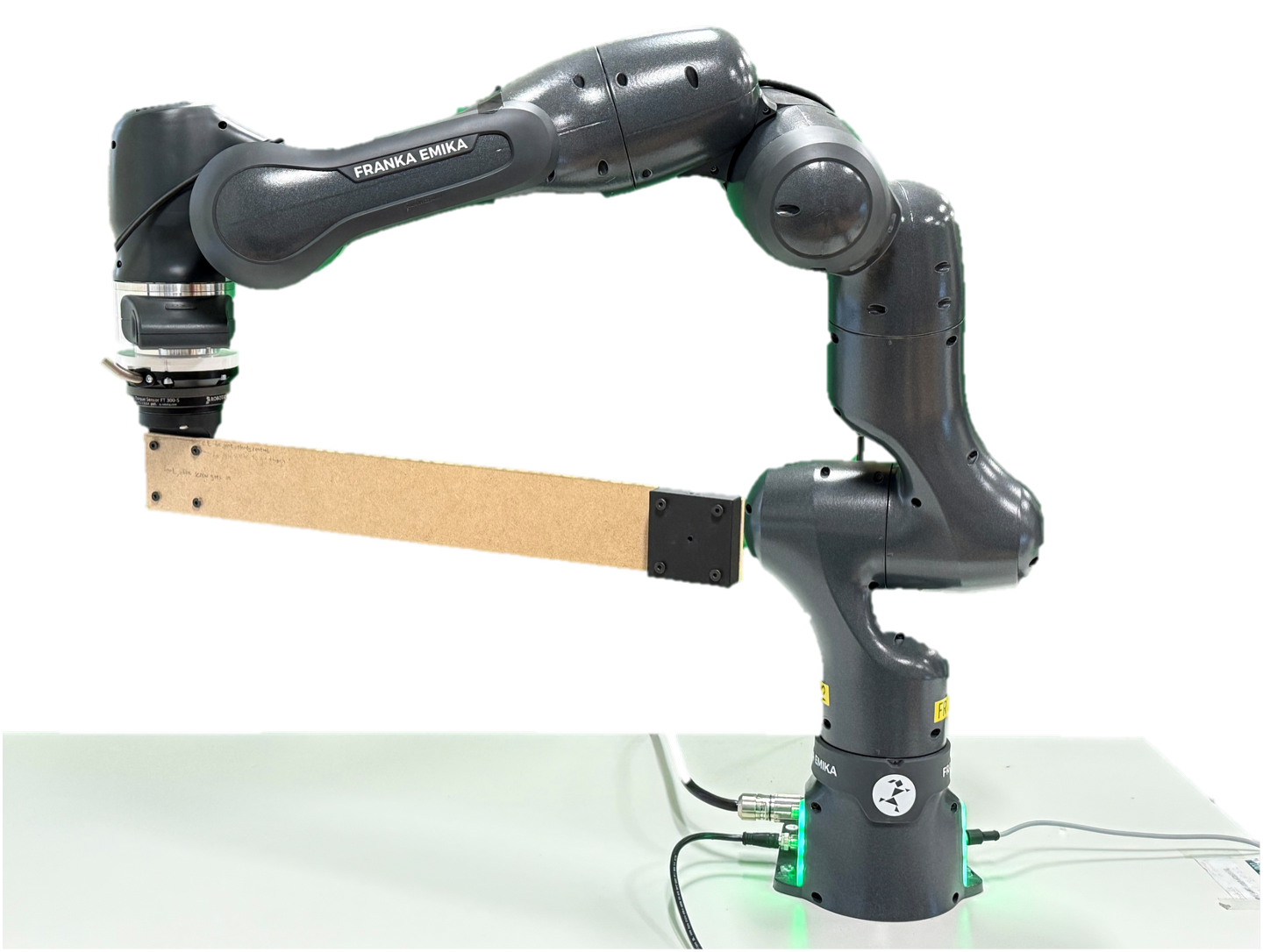}
    \end{subfigure}

    \begin{subfigure}[t]{.49\linewidth}
       \includegraphics[width=0.99\linewidth]{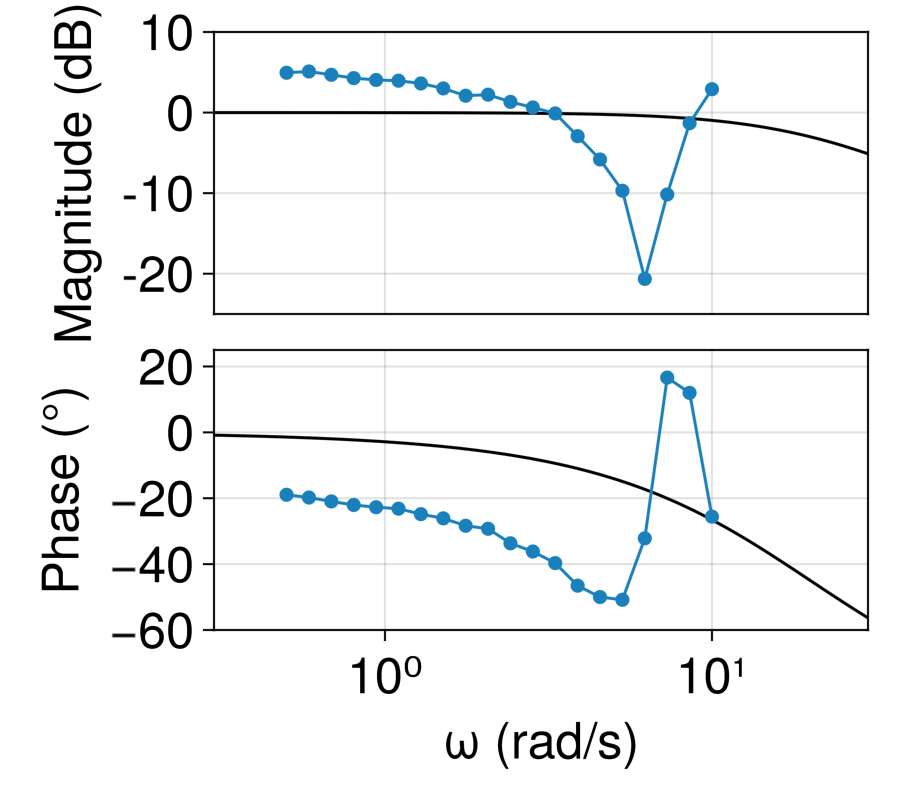}
       \caption{Case 1 (strip length 100 cm).}
       \label{fig:setup_nominal}
    \end{subfigure}
    \begin{subfigure}[t]{.49\linewidth}
       \includegraphics[width=0.99\linewidth]{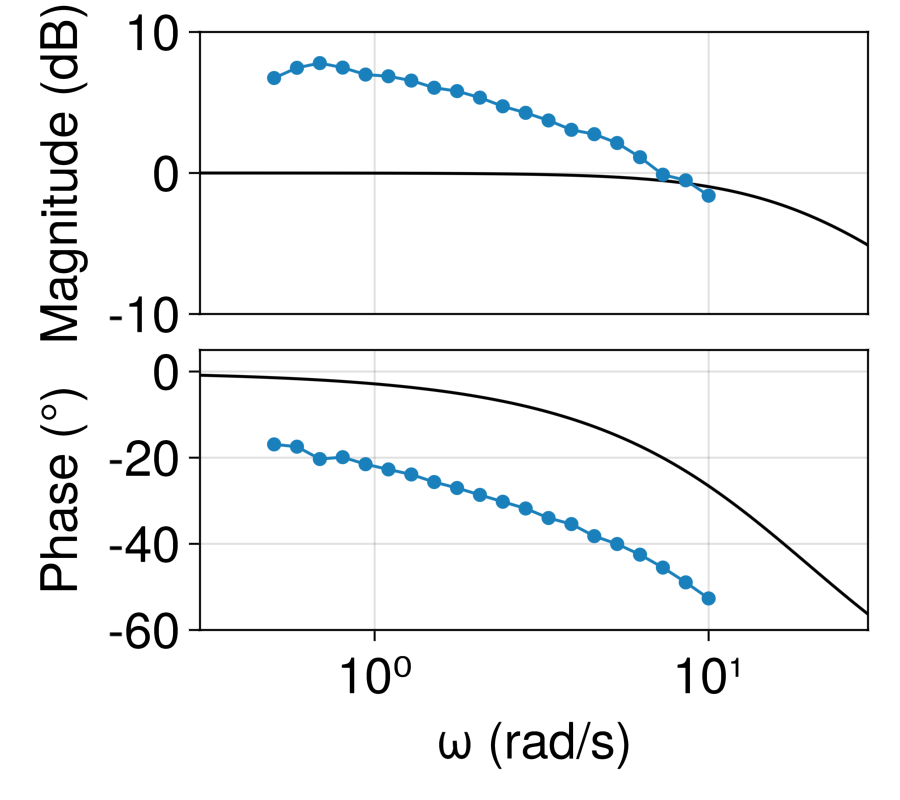}
       \caption{Case 2 (strip length 57 cm).}
       \label{fig:setup_dynamics}
    \end{subfigure}
    \caption{Single joint velocity control setting. Flexible wood strips of variable lengths are used to model different dynamic loads. Bode diagrams: reference model dynamics in black and (estimated, linearized) open-loop dynamics in \textcolor{blue}{blue}.}
    \label{fig:joint_setup}
\end{figure}

\subsection{Nominal tracking}
\label{subsec:joint_nominal_tracking}

\begin{figure*}[htbp]
    \centering
    \vspace{4mm}
    \begin{subfigure}[t]{\linewidth}
       \centering
       \includegraphics[width=0.32\linewidth]{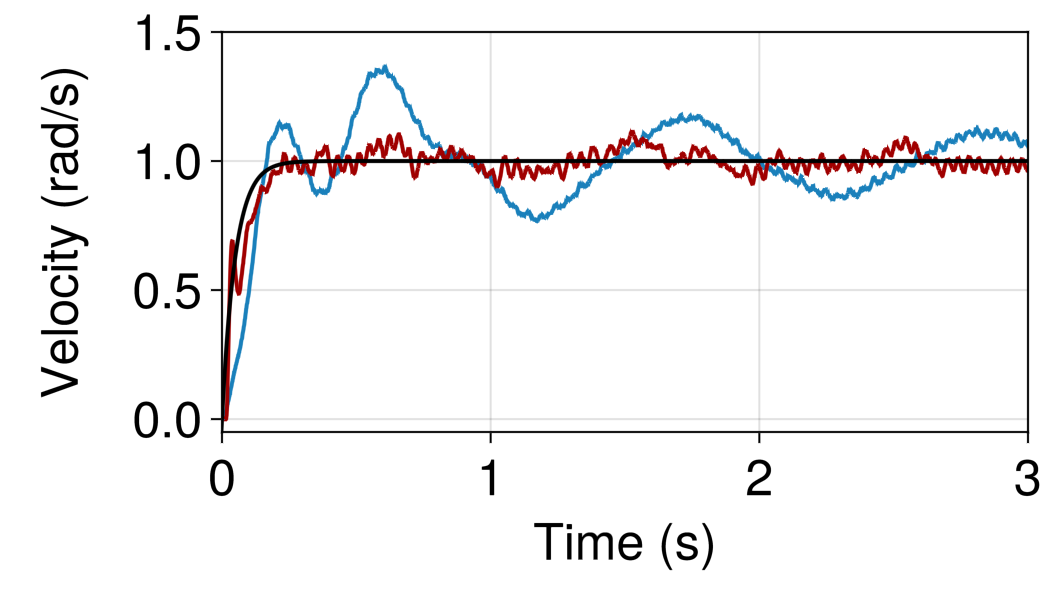}
       \includegraphics[width=0.32\linewidth]{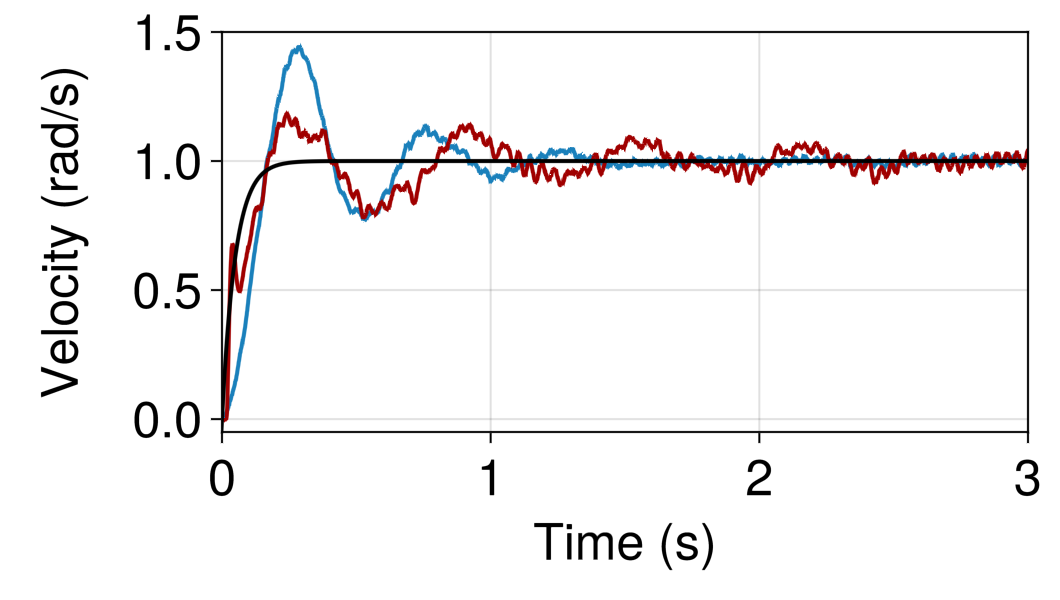}
       \includegraphics[width=0.32\linewidth]{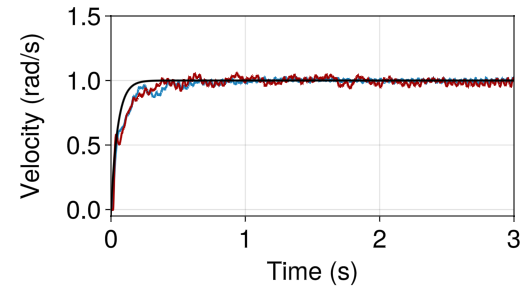}
    \end{subfigure}
    \begin{subfigure}[t]{.32\linewidth}
        \centering
       \includegraphics[width=\linewidth]{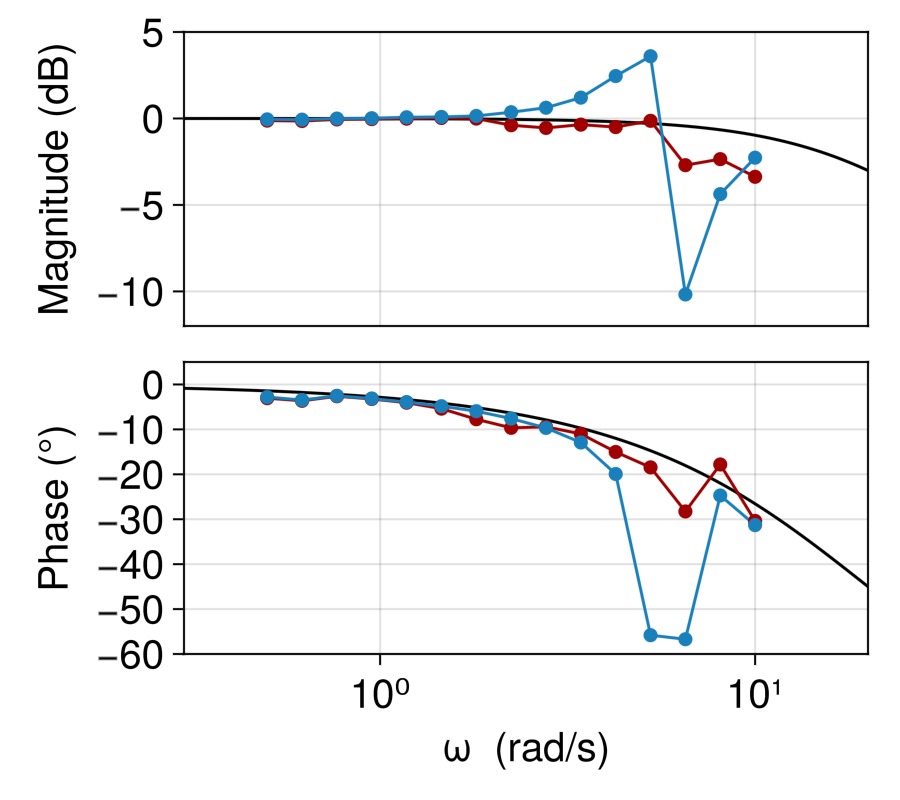}
       \caption{Performance of the controlled robot in Fig. \ref{fig:setup_nominal}
      trained on the load in Fig. \ref{fig:setup_nominal}.}
       \label{fig:nominal_tracking}
    \end{subfigure}
    \begin{subfigure}[t]{.32\linewidth}
        \centering
       \includegraphics[width=\linewidth]{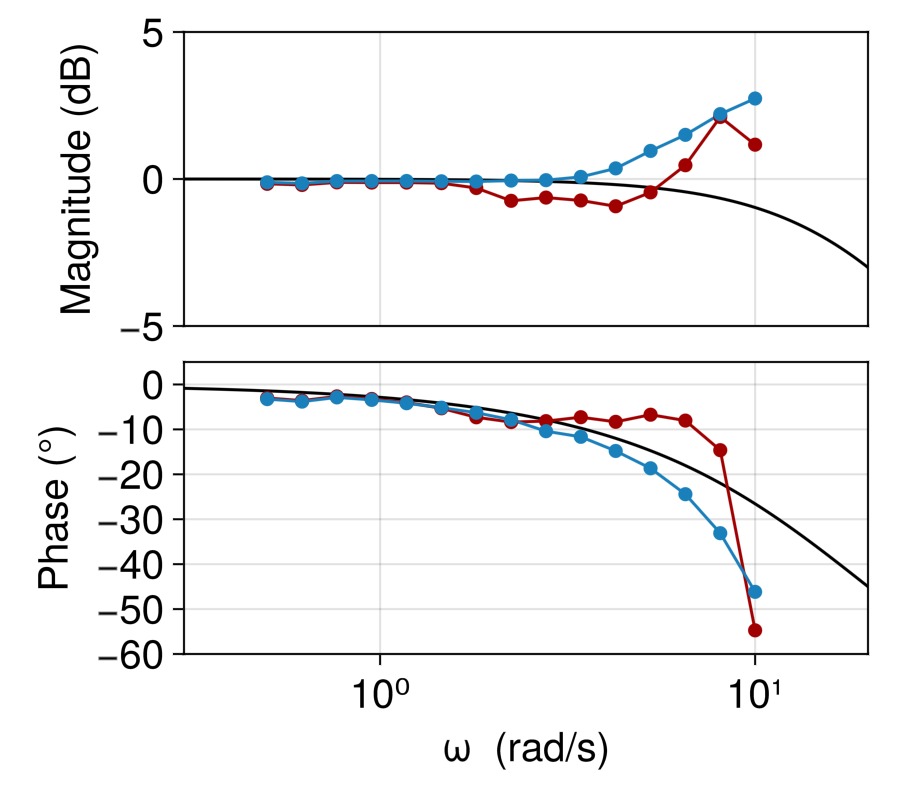}
       \caption{Performance of the controlled robot in Fig. \ref{fig:setup_dynamics}
      trained on the load in Fig. \ref{fig:setup_nominal}. }
       \label{fig:dynamics_change_before_retrain}
    \end{subfigure}
    \begin{subfigure}[t]{.32\linewidth}
        \centering
       \includegraphics[width=\linewidth]{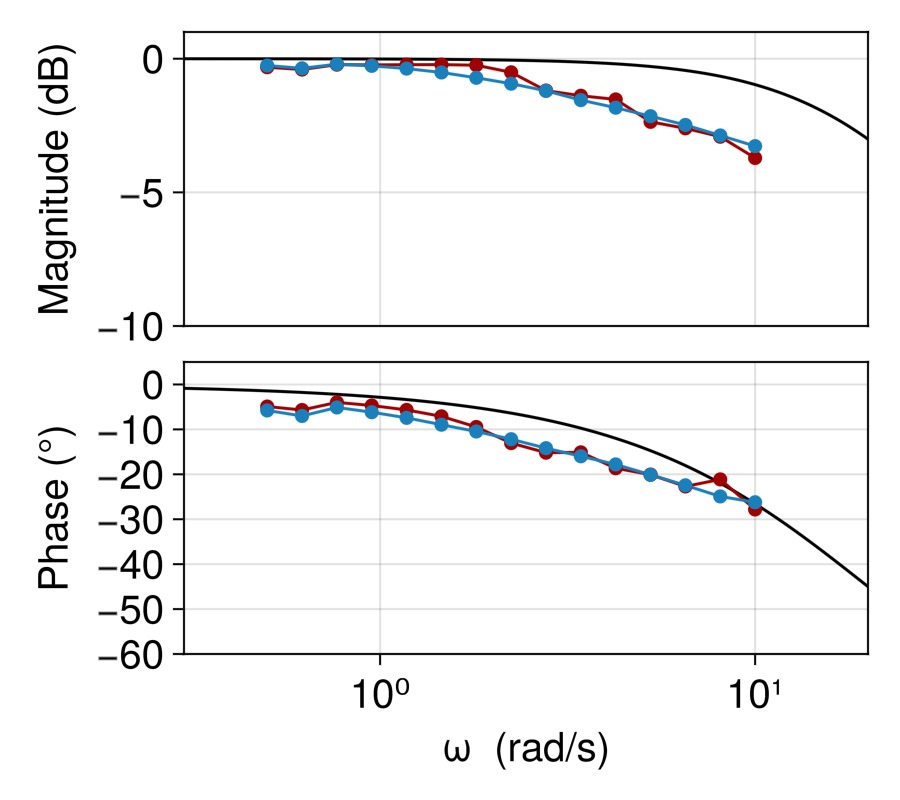}
       \caption{Performance of the controlled robot in Fig. \ref{fig:setup_dynamics}
      trained on the load in Fig. \ref{fig:setup_dynamics}.}
       \label{fig:dynamics_change_after_retrain}
    \end{subfigure}

    \caption{Joint space velocity tracking results. \textcolor{black}{Black}: target response and target reference model. \textcolor{red}{Red}: closed-loop response and sampled Bode diagram with iFIR controller. \textcolor{blue}{Blue}: closed-loop response and sampled Bode diagram with PID controller.}
    \label{fig:joint_result}
\end{figure*}

\begin{table*}[htbp]
  \centering
  {\small
  \begin{tabular}{@{}lcccc@{}}
    \toprule
    \multirow{2}{*} & \multicolumn{4}{c}{Step response (NRMSE)}\\
    \cmidrule{2-5}
    & PID & iFIR & iFIR (without passivity constraint) & Improvement [\%] \\
    \midrule
    Nominal & 0.156 & \textbf{0.092} & -- & 41.0\% \\
    Dynamics change (without retraining) & 0.146 & \textbf{0.105} & -- & 28.1\% \\
    Dynamics change (with retraining) & 0.093 & \textbf{0.096} & -- & -3.2\% \\
    Second order reference model & 0.287 & \textbf{0.190} & Instability & 33.8\% \\
    \bottomrule
  \end{tabular}
  }
    \caption{Joint velocity control. The improvement is iFIR over PID. iFIR without passivity constraint was often unstable.}
  \label{tab:results}
\end{table*}

We consider the case of Fig.  \ref{fig:setup_nominal}.
Our passive iFIR controller is of order 500 and operates at $200$ Hz. This choice limits the complexity of the filter while enabling a memory of about $2.5~\mathrm{s}$.
The control loop of the robot works at $1000$ Hz, therefore every output of the iFIR controller is used five times by the robot. In contrast, the PID baseline runs directly at $1000$ Hz using the full-rate measurements.

We chose a first-order reference model represented in the Laplace domain by
$
M_{r} = \frac{1}{0.05s+1},
$
which yields a fast and monotonic response with $0.11~\mathrm{s}$ rise time (the time required for the output to move from 10\% to 90\% of its final value) and $0.20~\mathrm{s}$ settling time (the time for the output to remain within a 2\% band around the final value). Its bandwidth, the frequency below which the system accurately follows reference signals, is about $20~\mathrm{rad/s}$.

Data are generated by probing the system using a closed-loop PD tracking of the reference joint position 
$q_{\mathrm{ref}}(t) \;=\; \sum_{i=1}^{10} A_i \sin(\omega_i t + \phi_i)$
where the excitation frequencies $\omega_i$ and the phase shifts $\phi_i$ are uniformly spaced in $[1,10]~\text{rad/s}$ and $[0,\pi]$ respectively. The amplitudes $A_i$ are chosen so that the resulting joint motion remains within a safety margin of the joint limits. 
Probing data of three minutes is gathered and training the iFIR controller takes $93 ~\mathrm{s}$.

Fig.  \ref{fig:nominal_tracking} shows the time-domain tracking of a constant 1.0~rad/s velocity reference and sampled closed-loop Bode diagrams.
The experimental Bode points were obtained from sinusoidal velocity reference tests of variable frequency, with amplitude bounded at $1.0$ rad/s. 
Bode points are estimated using discrete Fourier transform (DFT).
Both Fig.  \ref{fig:nominal_tracking} and the NRMSE summarized in Table \ref{tab:results} indicate that the passive iFIR controller outperforms the PID, achieving 41\% lower NRMSE on the step response.

\subsection{Robust tracking and retraining}
We test both PID and iFIR controllers designed for the case of Fig.  \ref{fig:setup_nominal} 
on the perturbed load of Fig.  \ref{fig:setup_dynamics}. 
We consider the same batch of tests of Section \ref{subsec:joint_nominal_tracking}.
Without retraining, as expected, both closed loops 
exhibit degraded tracking performance (Fig.  \ref{fig:dynamics_change_before_retrain}). 
Notably, even in this perturbed case, the iFIR controller achieved a closer match to the reference model and reduced overshoot. 

Because iFIR controller training is fast, performance can be further improved by collecting data under the new load conditions and rapidly retraining both controllers. 
The results after retraining are shown in Fig. \ref{fig:dynamics_change_after_retrain}. NRMSE values are summarized in Table \ref{tab:results}. 
After retraining, both controllers exhibit similar behavior,
with a modest improvement of -3.2\% in favor of PID. This indicates that both designs achieve comparable performance for simpler load dynamics.

\subsection{Passivity and stability}

To highlight the importance of stability guarantee provided by the passivity constraints, we redesign the iFIR controller of  Fig.  \ref{fig:setup_nominal} adopting a more aggressive reference model, represented 
in the Laplace domain by
$
    M_r(s) = \frac{100}{s^2 + 6s + 100}.
$
We design three controllers: a passive iFIR, an iFIR learned without passivity constraints, and a PID baseline.
The iFIR designed without passivity constraints is non-passive (by testing its Nyquist diagram) and leads unstable closed-loop behavior in most test cases. As shown in Fig. \ref{fig:nonPassive}, the non-passive iFIR closed loop shows growing oscillations that eventually trigger joint velocity safety limits, whereas the passive iFIR and PID controllers remain stable.
The non-passive iFIR provides an example of an arbitrary learned controller optimized over costs that do not offer stability guarantees. In our setting, enforcing passivity is critical for maintaining stability under demanding reference dynamics.

\begin{figure}[htbp]
    \centering
    \includegraphics[width=0.48\linewidth]{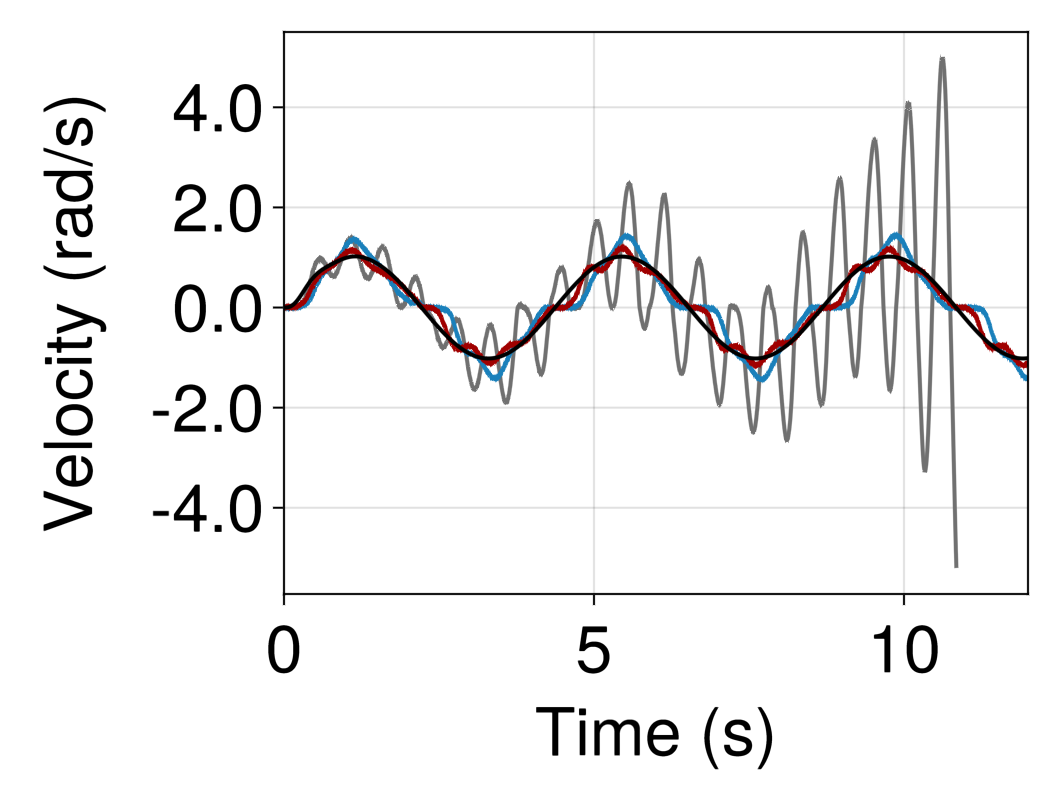}
    \includegraphics[width=0.48\linewidth]{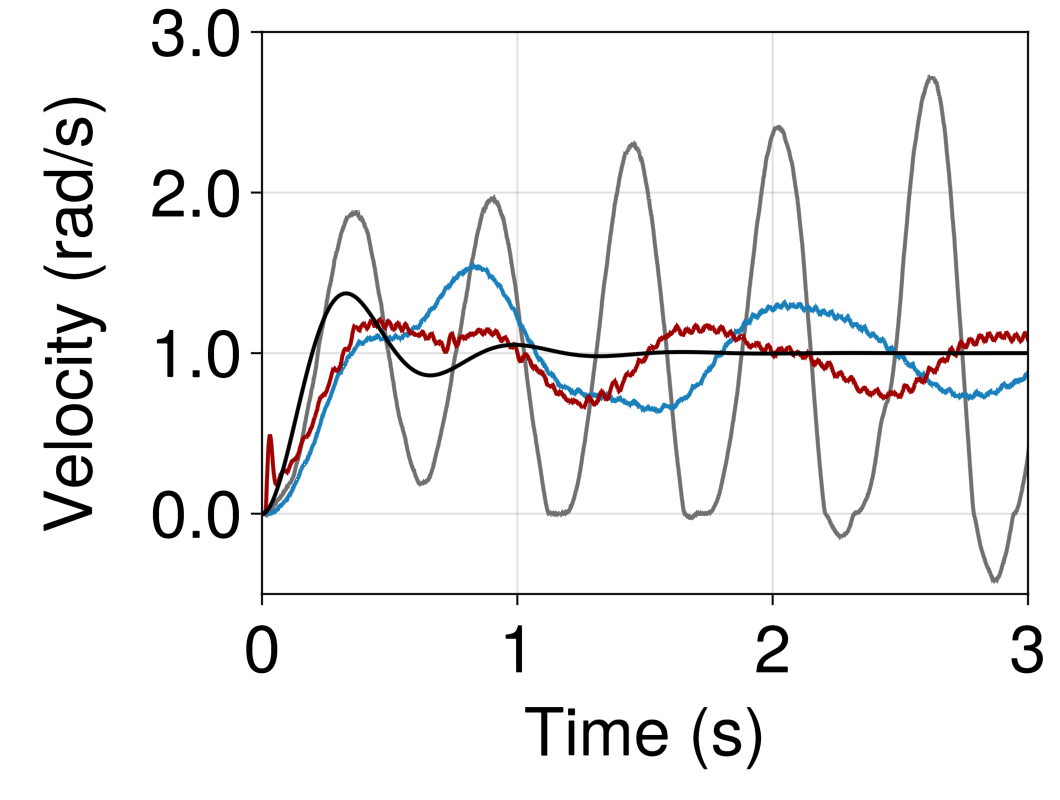}
    \caption{Closed-loop response to sinusoidal reference (left) and step reference (right). \textcolor{black}{Black}: target response.  \textcolor{red}{Red}: passive iFIR. \textcolor{blue}{Blue}: PID. \textcolor{gray}{Gray}: non-passive iFIR. Learning without passivity constraints can cause instability. }
    \label{fig:nonPassive}
    \vspace{-4mm}
\end{figure}

\section{Cartesian space velocity control}
\label{sec:cartesian_space}

\subsection{SISO Cartesian velocity control}
\label{subsec:siso_cartesian}

In many manipulation tasks, end-effector motion is most naturally specified in Cartesian space. 
We investigate Cartesian velocity control on the Franka Research 3 robot without any additional end-effector load. We focus on a local operating region where training and testing are performed near the same configuration, and compare PID and iFIR controllers under both SISO and MIMO designs.

We first consider the simplest Cartesian controller structure, where each velocity axis is regulated independently by a SISO controller. Let $r_i$ and $v_i$ denote the reference and measured Cartesian velocity along the three axes $i\in\{x,y,z\}$. 
Each axis controller outputs a Cartesian force $F_i$, and the commanded joint torques are obtained via
\begin{equation}
    \tau = J(q)^\top F,\qquad F = [F_x,F_y,F_z]^\top,
\end{equation}
where $J(q)$ is the end-effector Jacobian. We consider a uniform reference model for the three axes, the same as in previous experiments, 
$M_{r1}(s) = \frac{1}{0.05s + 1}$.

Training data with one minute length are collected by exciting the robot along six approximately evenly spaced Cartesian directions using closed-loop PD tracking of multi-sine position trajectories,
$
p_{\mathrm{ref},i}(t)=
p_{c,i}+\sum_{k=1}^{10} A_k \sin(\omega_k t+\phi_k),
$
with excitation frequencies $\omega_k\in[1,10]~\mathrm{rad/s}$ and phase $0\le \phi_k\le\pi$. The amplitude is chosen such that the end-effector remains within a $0.1$ m diameter range centered at the operating position. Training the iFIR controllers of order $500$ for each axis takes approximately $100$ s.

Performance is evaluated using (i) time-domain tracking of a random-like velocity profile formed by a sum of cosines with randomly chosen phases and a trapezoidal profile representative of point-to-point motion, and (ii) frequency-response plots obtained from tests with sinusoidal velocity reference of amplitude $0.1~\mathrm{m/s}$ applied to all directions simultaneously and analyzed with DFT-based gain and phase estimation.
Results are shown in Fig.~\ref{fig:SISO} and Table \ref{tab:results_cartesian}. Both time- and frequency-domain results show that the iFIR controller delivers stronger performance than the PID baseline.

\begin{figure*}[htbp]
    \centering
    \begin{subfigure}[t]{\linewidth}
       \centering
       \includegraphics[width=0.3\linewidth]{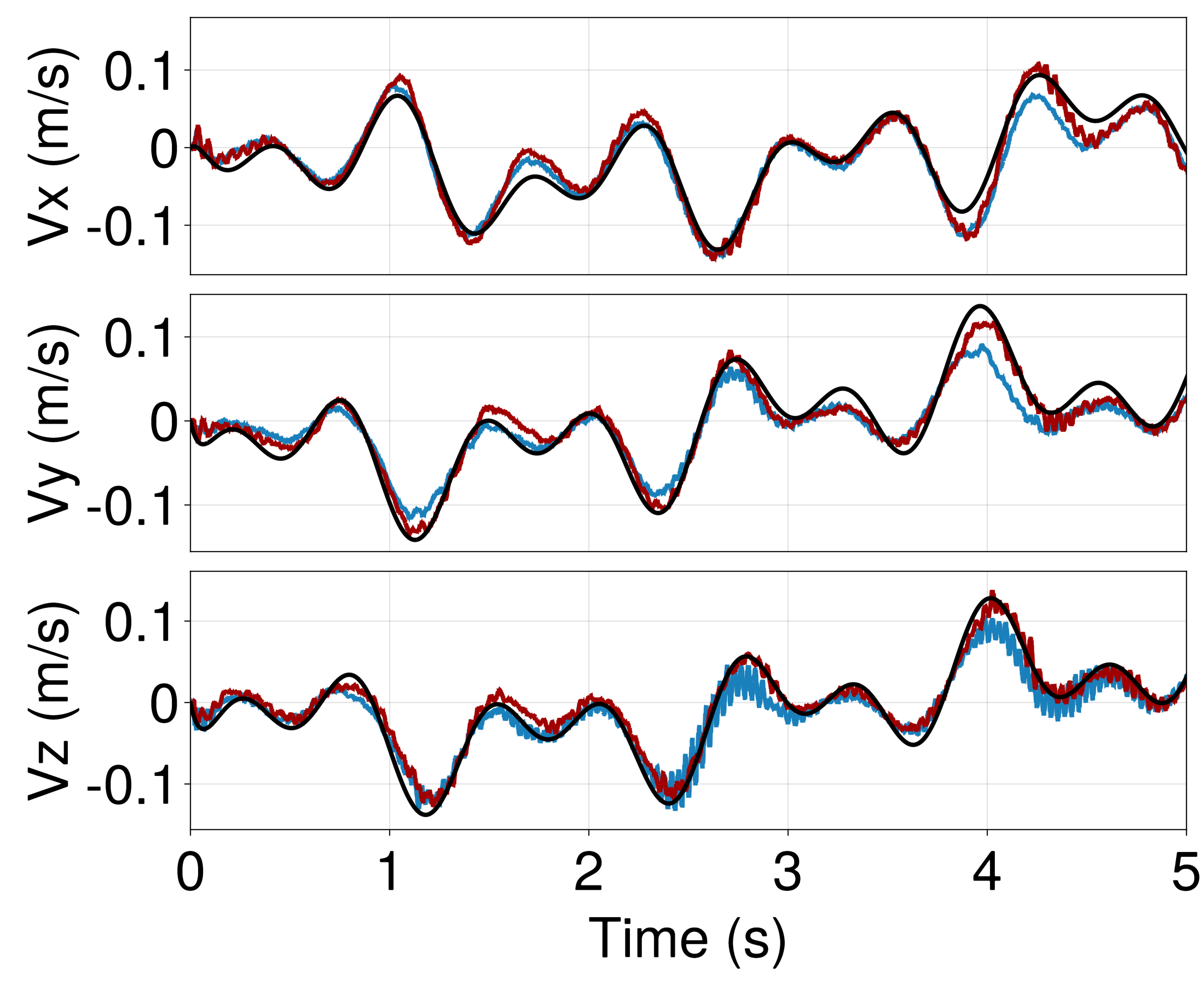}
       \includegraphics[width=0.3\linewidth]{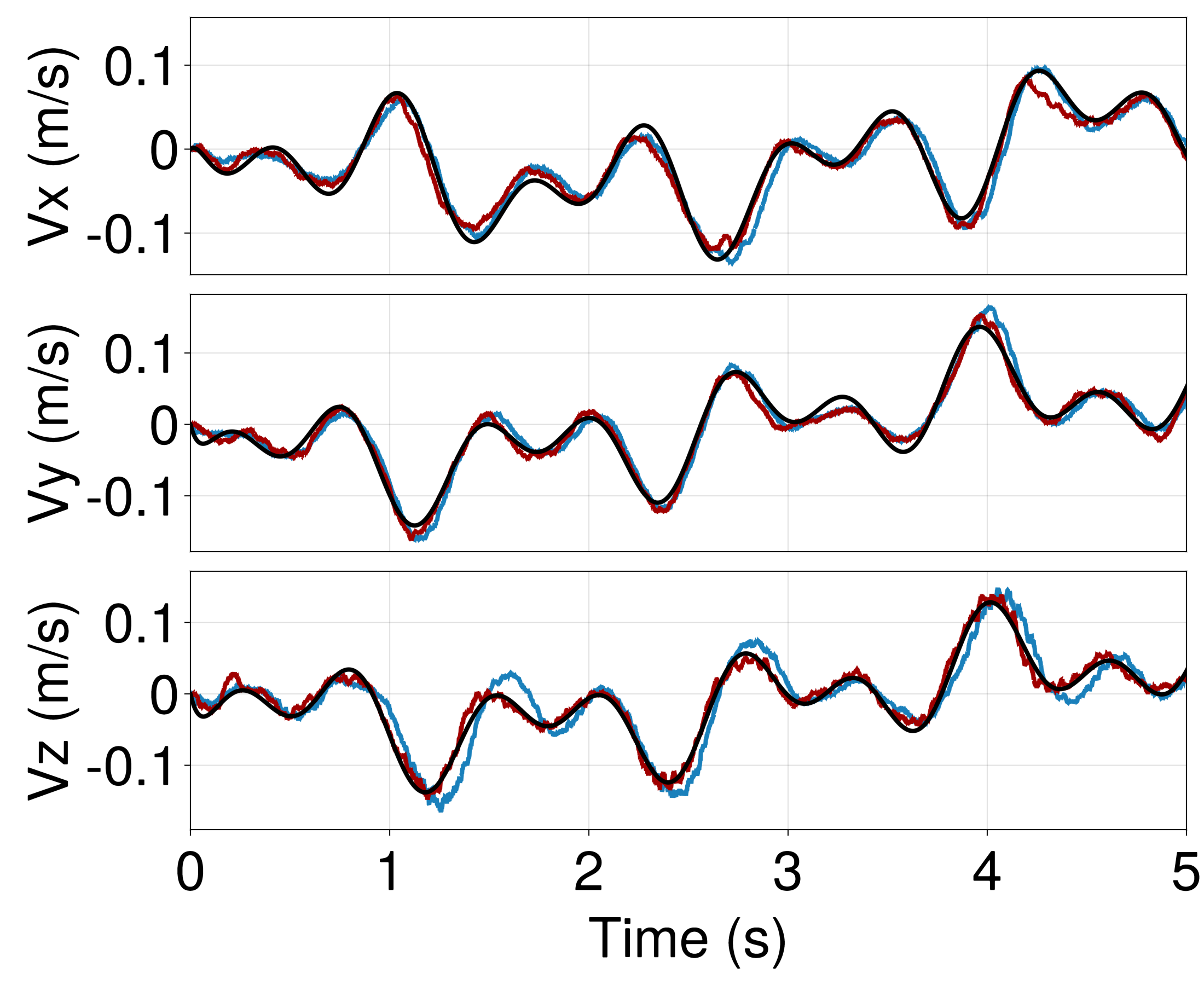}
       \includegraphics[width=0.3\linewidth]{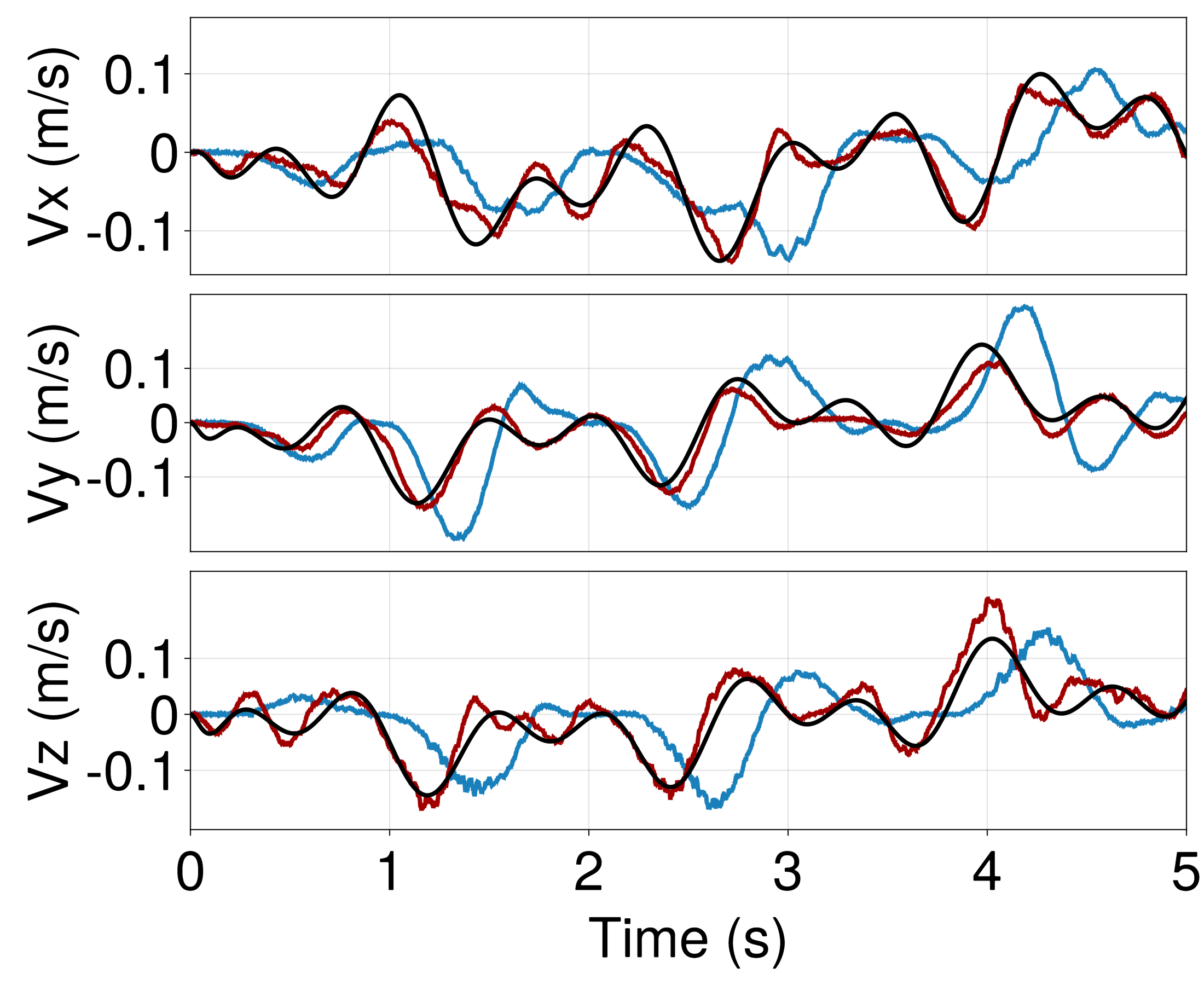}
    \end{subfigure}
    \begin{subfigure}[t]{\linewidth}
       \centering
       \includegraphics[width=0.3\linewidth]{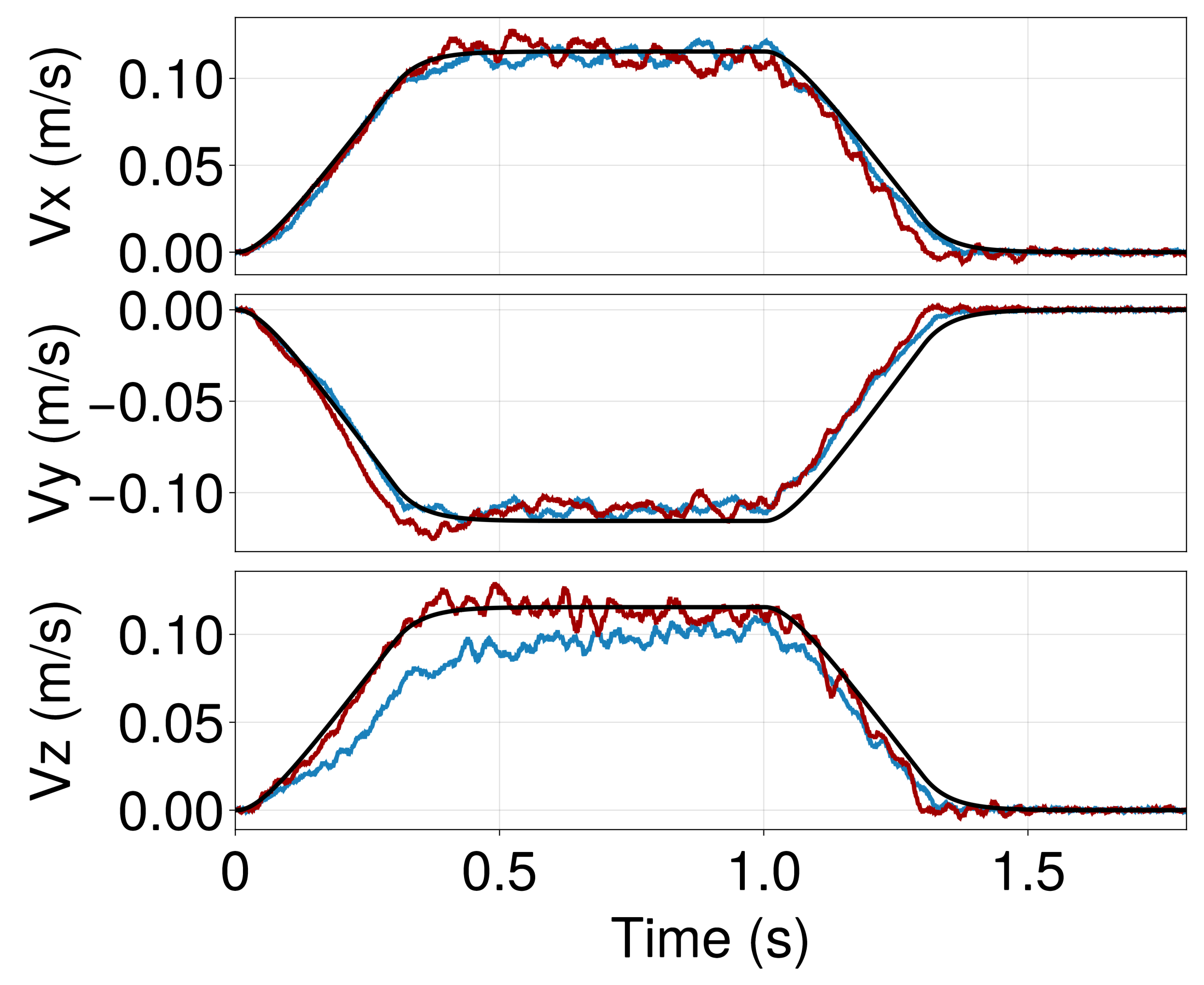}
       \includegraphics[width=0.3\linewidth]{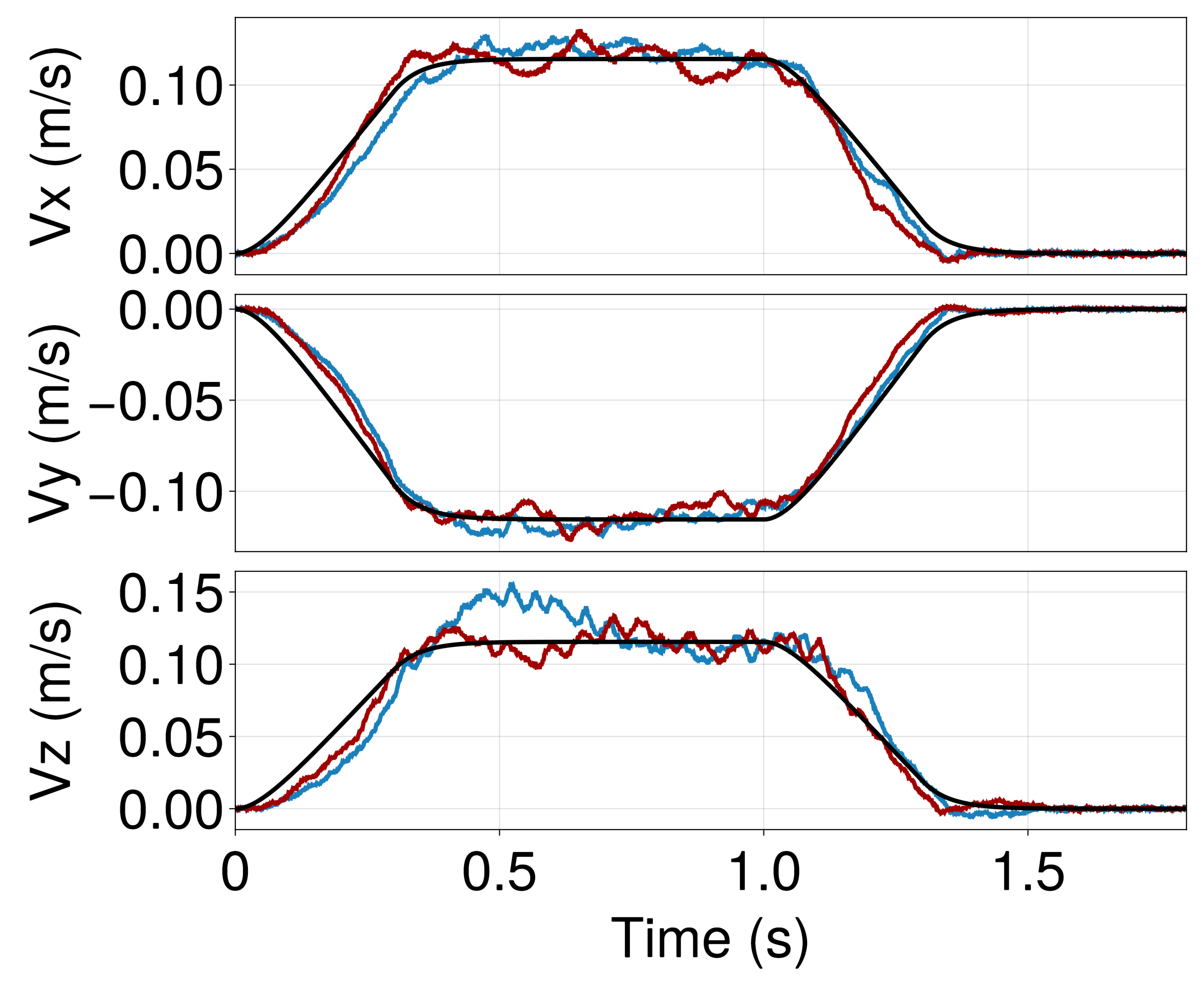}
       \includegraphics[width=0.3\linewidth]{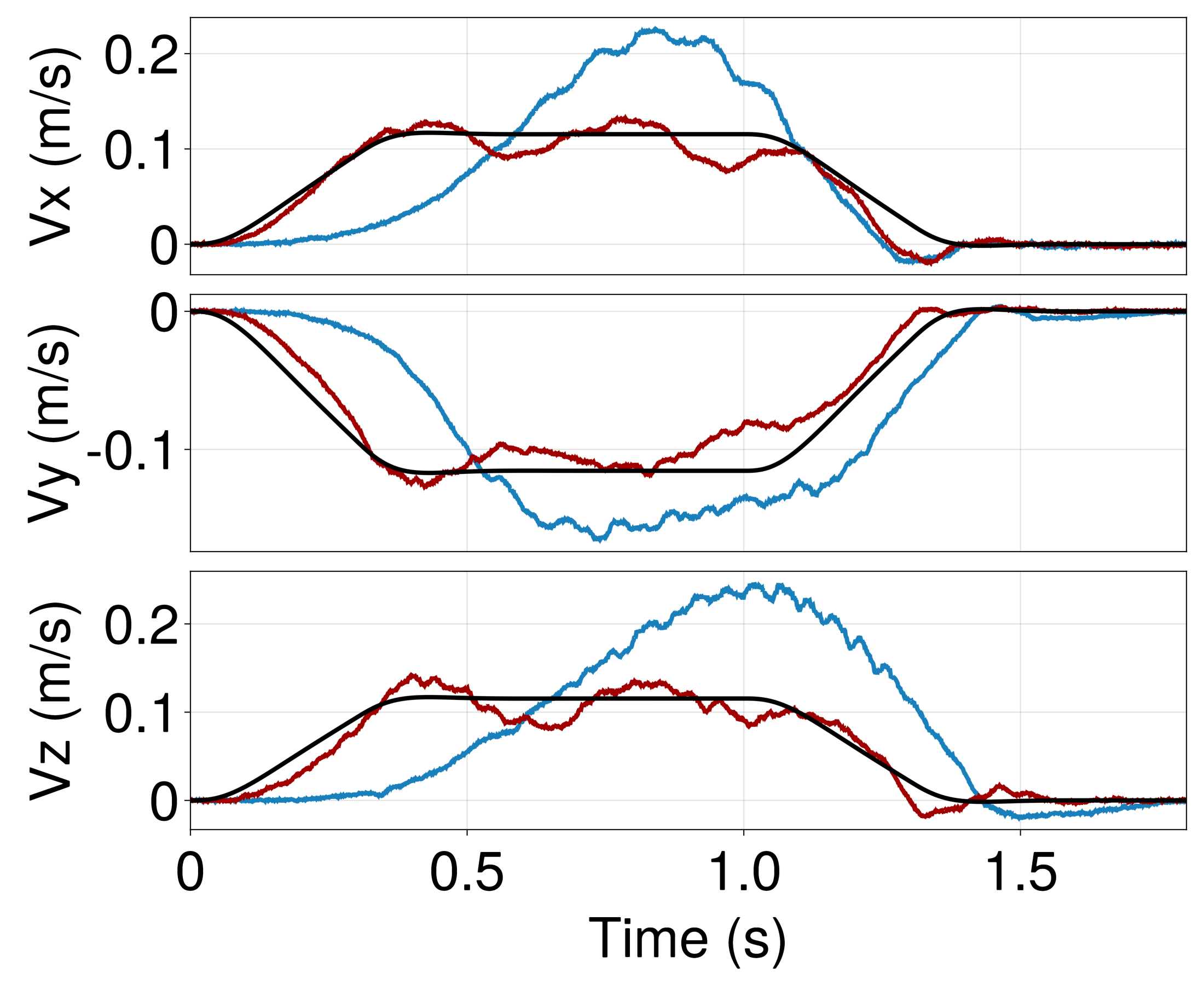}
    \end{subfigure}
    \begin{subfigure}[t]{.3\linewidth}
        \centering
       \includegraphics[width=\linewidth]{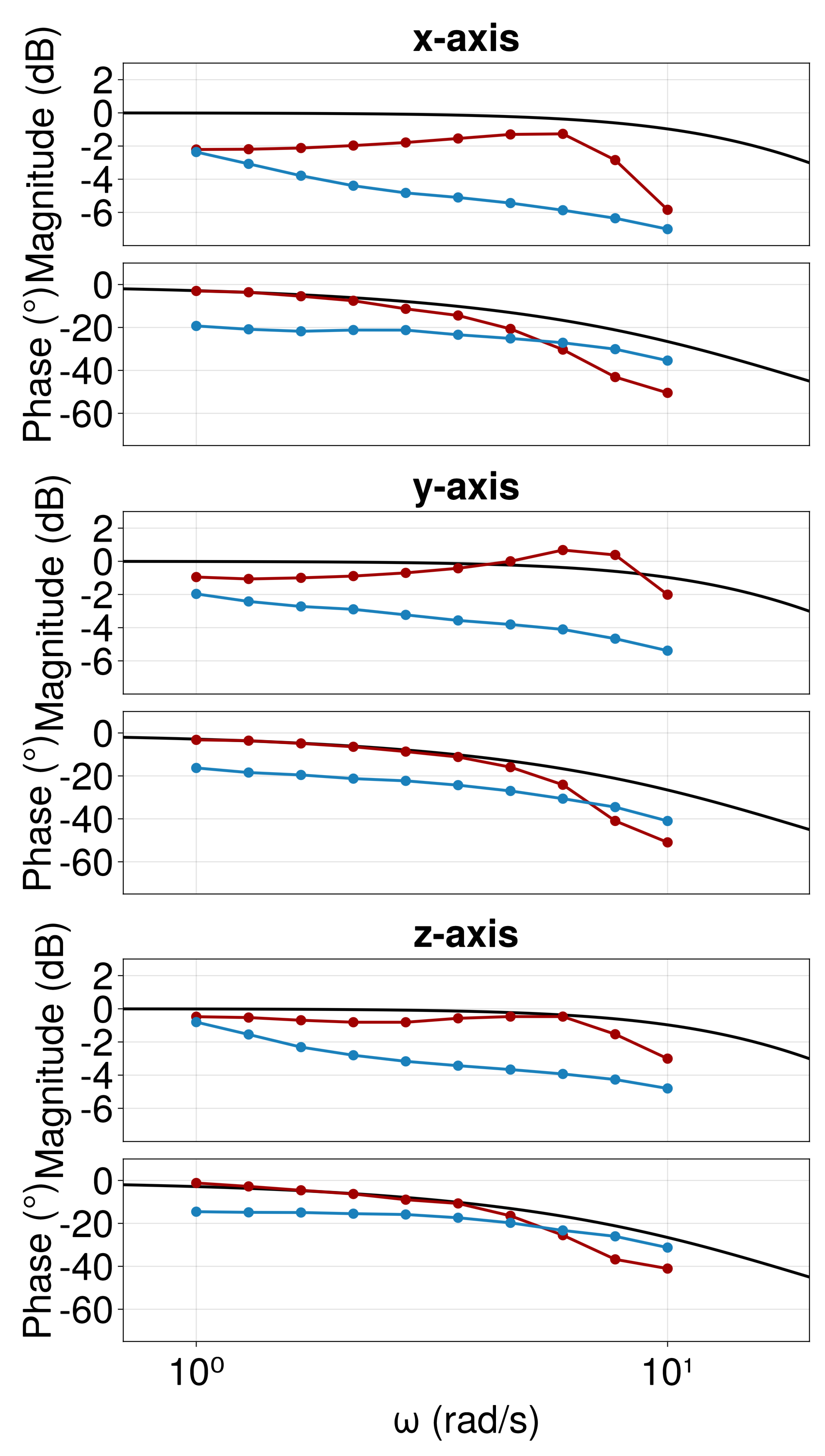}
       \caption{SISO.}
       \label{fig:SISO}
    \end{subfigure}
    \begin{subfigure}[t]{.3\linewidth}
        \centering
       \includegraphics[width=\linewidth]{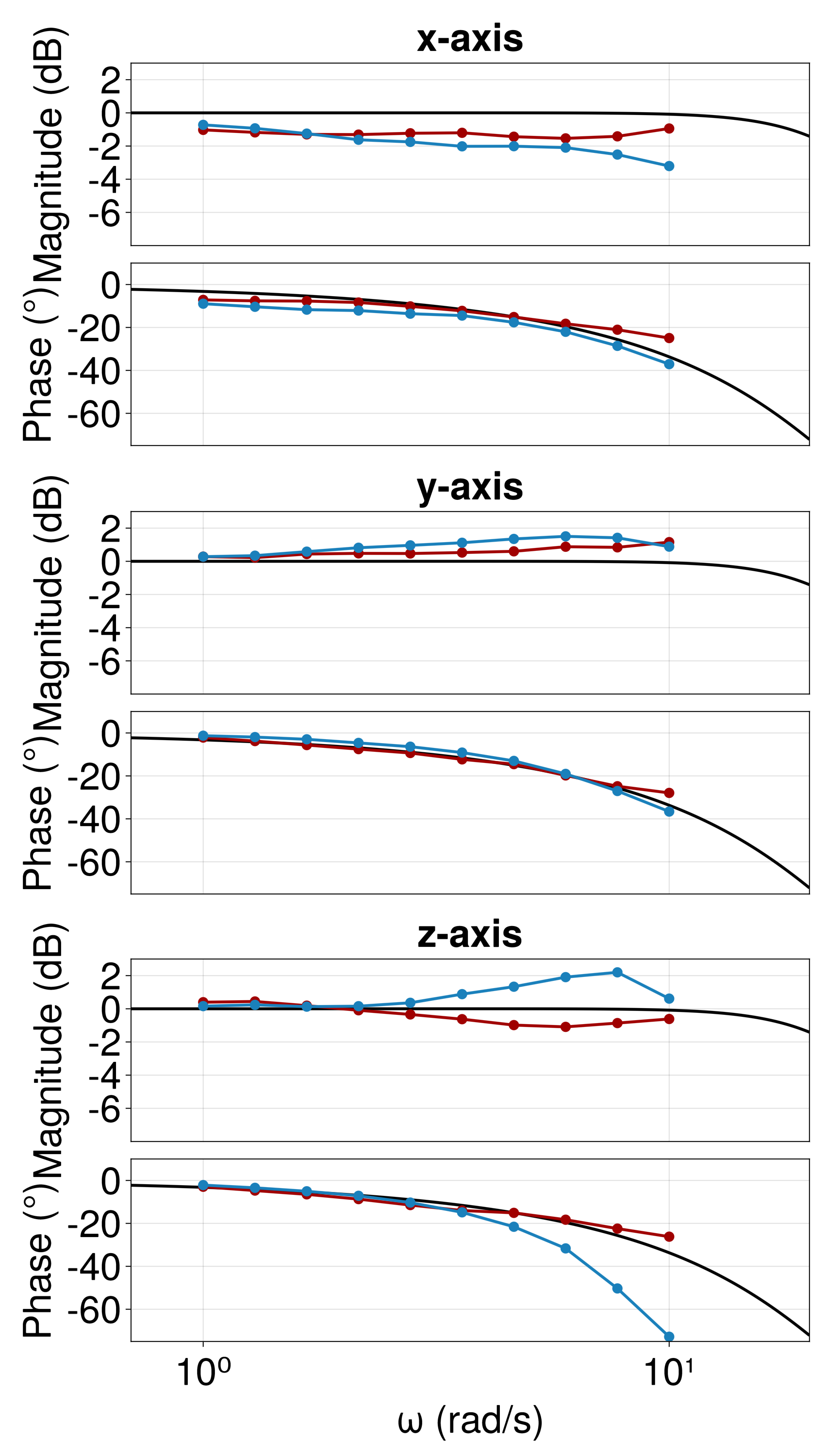}
       \caption{MIMO 1st order.}
       \label{fig:MIMO}
    \end{subfigure}
    \begin{subfigure}[t]{.3\linewidth}
        \centering
       \includegraphics[width=\linewidth]{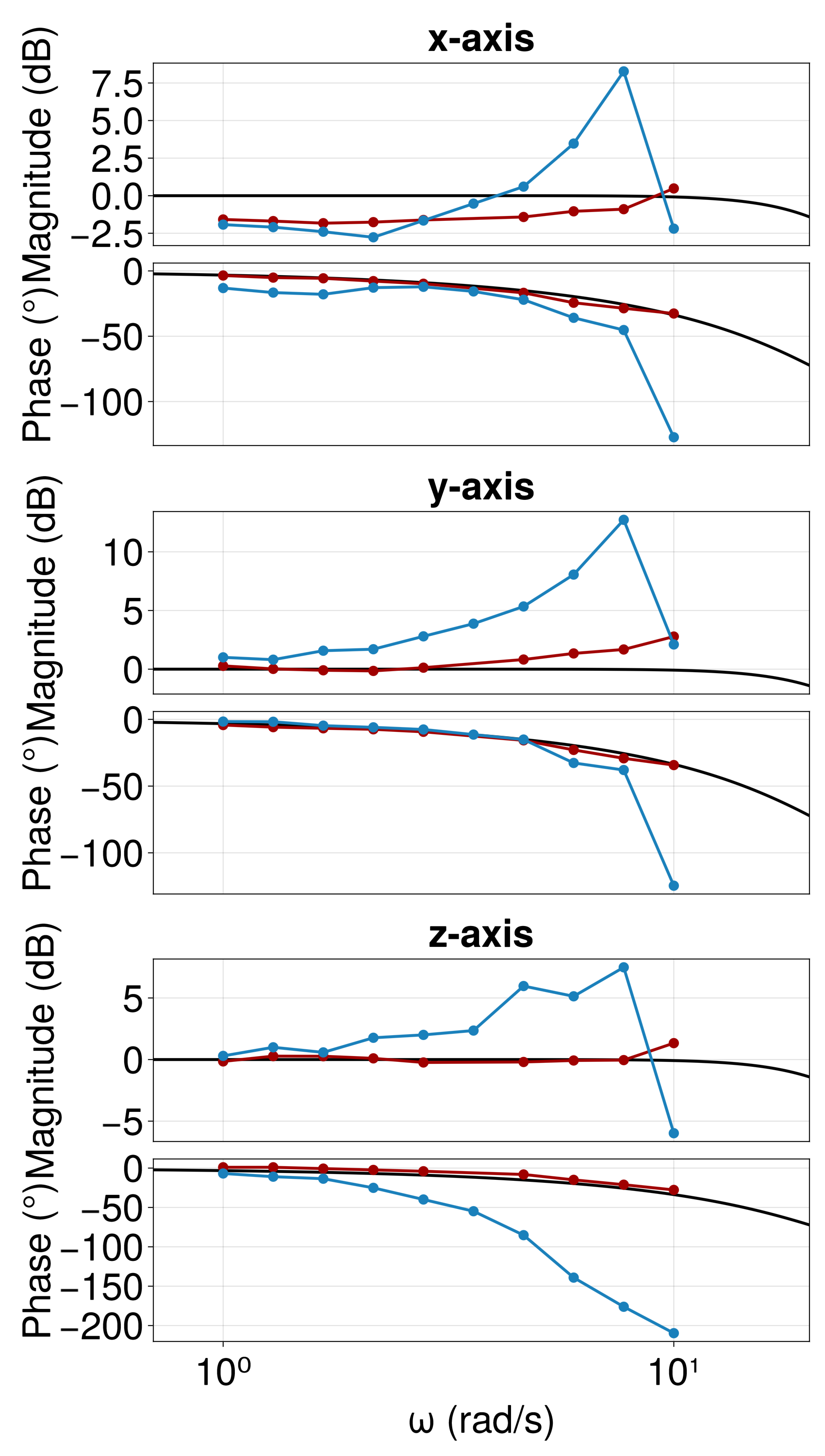}
       \caption{MIMO 2nd order.}
       \label{fig:MIMO_2nd}
    \end{subfigure}

    \caption{Cartesian space velocity control results. \textcolor{black}{Black}: target response and target reference model. \textcolor{red}{Red}: closed-loop response and transfer function with iFIR controller. \textcolor{blue}{Blue}: closed-loop response and transfer function with PID controller.}
    \label{fig:cartesian_result}
\end{figure*}

\begin{table*}[htbp]
  \centering
  \vspace{-4mm}
  \begin{tabular}{@{}llcccccc@{}}
    \toprule
    \multirow{2}{*}{Controller type} & \multirow{2}{*}{Reference model}
      & \multicolumn{3}{c}{Random velocity profile (NRMSE)} 
      & \multicolumn{3}{c}{Trapezoidal velocity profile (NRMSE)} \\ 
    \cmidrule(lr){3-5}\cmidrule(lr){6-8}
    & & PID & iFIR & Improvement~[\%] 
      & PID & iFIR & Improvement~[\%] \\
    \midrule
    SISO  & $M_{r1}$  & 0.326 & \textbf{0.246} & 24.6\% & 0.113 & \textbf{0.091} & 19.7\% \\
    MIMO  & $M_{r1}$  & 0.268 & \textbf{0.182} & 32.0\% & 0.119 & \textbf{0.096} & 19.2\% \\
    MIMO  & $M_{r2}$  & 1.108 & \textbf{0.358} & 67.7\% & 0.660 & \textbf{0.169} & 74.5\% \\
    \bottomrule
  \end{tabular}
  \caption{Performance summary for Cartesian space velocity control near the training configuration (random and trapezoidal velocity profiles). The improvement is iFIR over PID.}
  \label{tab:results_cartesian}
  \vspace{-2mm}
\end{table*}

\subsection{MIMO Cartesian velocity control}

The use of separate controllers on each axis is suboptimal as the manipulator Cartesian dynamics are coupled, through non-diagonal task-space inertia and Coriolis terms: torques applied to correct one axis induce motion along all other axes \cite{1087068}. This motivates the development of a $3\times3$ MIMO iFIR controller, operating on the full set of axes at the same time. 

We consider the reference model $M_{r1}(s)=\frac{1}{0.05s+1}I_3$, whose input/output pair is the reference vector $r=[r_{vx},r_{vy},r_{vz}]^\top$ and the desired velocity output vector $v^*=[v_x^*,v_y^*,v_z^*]^\top$, respectively.  
The training of the controller is based on the same data acquired during the SISO design of Section \ref{subsec:siso_cartesian}. Training the MIMO iFIR controller of order $200$ takes $298$s.
A MIMO PID controller is also trained for comparison. Its proportional, integral, and derivative `gains' are now  $3\times 3$ positive definite matrices.

Results are reported in Fig.  \ref{fig:MIMO} and Table \ref{tab:results_cartesian}. In the MIMO setting, both PID and iFIR perform better than in the SISO case of Section \ref{subsec:siso_cartesian}. Both  follow the first-order reference reasonably well over the identified bandwidth,
with the iFIR achieving superior tracking performance.
A clearer difference arises when the same controllers are evaluated at an end-effector position away from the training point, as shown in Fig.~\ref{fig:MIMO_config2}. While performance degrades for both, the PID response develops noticeable high-frequency oscillations in all directions, possibly caused by overly aggressive gains. Overall, PID and iFIR perform similarly near the training configuration, but iFIR degrades more gracefully under operating-point shifts.

\begin{figure}[htbp]
    \centering
    \vspace{3mm}
   \includegraphics[width=.65\linewidth]{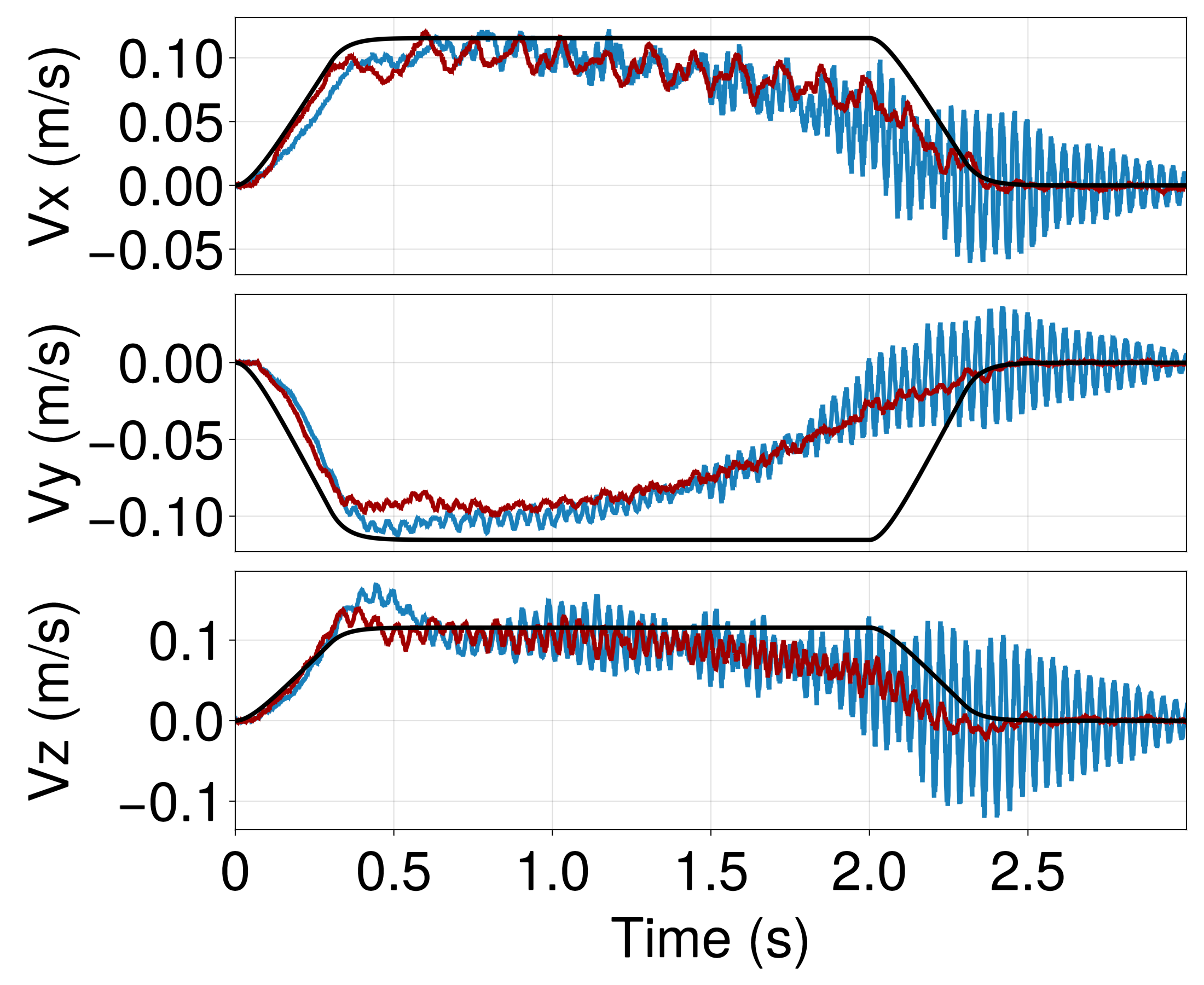}
   \caption{MIMO performance for $M_{r1}$: trapezoidal profile tracking when operating away from training configuration.}
   \vspace{-4mm}
   \label{fig:MIMO_config2}
\end{figure}

To assess controller performance under more demanding target dynamics, we replace the first-order reference model by the diagonal second-order model
$
    M_{r2}(s)=\frac{\omega_n^2}{s^2 + 2\zeta\omega_n s + \omega_n^2}I_3
$
for $\omega_n = 25$ and $\zeta=0.7$. This model characterizes a faster transient (rise time of $0.085~\mathrm{s}$, settling time of $0.239~\mathrm{s}$, modest overshoot of $4.6\%$) and a higher effective bandwidth.

Fig.  \ref{fig:MIMO_2nd} summarizes the time- and frequency-domain responses. Under this stricter reference model, the PID shows stronger limitation, with 
larger overshoot and phase lag, and visibly more oscillatory tracking. 
Across the random-trajectory and trapezoidal-profile tests, the iFIR controller is significantly better at tracking than PID, with improvements of up to 67.7\% and 74.5\%, respectively.

\section{Conclusion and Future Work}
\label{sec:discussion}

We presented passive iFIR controllers for joint- and Cartesian-space velocity control validated experimentally on a Franka Research~3 manipulator. We have shown that iFIR control is a potential alternative to PID control. With modest computational resources, using only a few minutes of data, the proposed method learns controllers that (i) satisfy passivity constraints and (ii) outperform VRFT-tuned PID baseline especially when the target dynamics are demanding and the end-effector dynamics are high-order.
Across joint-space and Cartesian-space SISO/MIMO scenarios, passive iFIR achieves substantial reductions in normalized RMS tracking error, with improvements of up to $74.5\%$ in the most demanding Cartesian MIMO task. 
The experiments further indicate improved robustness to changes in dynamics and configuration compared with PID, and confirm that enforcing passivity is critical for preventing instability under aggressive or inexpertly chosen reference models.

Our design approach is still limited by the hand-chosen reference model: targets that are too demanding may be infeasible given actuation and bandwidth limits, while overly slow targets can be dominated by stiction and require additional friction compensation. 
In addition, our current implementation does not adapt online as the robot moves through the workspace. It is also a linear design, which shows obvious limitations in handling the complex nonlinearities of a robot manipulator. Future work will focus on 
(i) the automatic derivation of reference models and (ii) nonlinear, adaptive iFIR controllers.

\bibliographystyle{IEEEtran}
\bibliography{bibliography_v2,bib_jeff,bibliography_FF}

\end{document}